\def\eqref#1{equation~\ref{#1}}
\def\1{\bm{1}}
\DeclareMathAlphabet{\mathsfit}{\encodingdefault}{\sfdefault}{m}{sl}
\SetMathAlphabet{\mathsfit}{bold}{\encodingdefault}{\sfdefault}{bx}{n}
\theoremstyle{plain}
\newtheorem{thm}{Theorem}
\newtheorem{cor}[thm]{Corollary}
\theoremstyle{definition}
\newcommand{\pd}{\partial}
\title{Spike Accumulation Forwarding for Effective Training \\ of Spiking Neural Networks}
\author{\name Ryuji Saiin\footnotemark[1]
\email ryuji.saiin@aisin-software.com \\
      \addr Tokyo Research Center, AISIN, Tokyo, Japan\\
      AISIN SOFTWARE, Aichi, Japan 
      \AAND
      \name Tomoya Shirakawa\footnotemark[1] \email kinezdayo@gmail.com \\
      \addr Graduate School of Mathematics, Nagoya University, Aichi, Japan
      \AAND
      \name Sota Yoshihara\thanks{Equal contribution.} \thanks{Corresponding author.} \email sota.yoshihara.e6@math.nagoya-u.ac.jp.\\
      \addr Graduate School of Mathematics, Nagoya University, Aichi, Japan
      \AAND
      \name Yoshihide Sawada \email yoshihide.sawada@gmail.com \\
      \addr Tokyo Research Center, AISIN, Tokyo, Japan
      \AAND
      \name Hiroyuki Kusumoto \email kusumoto-108@outlook.com \\
      \addr Graduate School of Mathematics, Nagoya University, Aichi, Japan}
\begin{document}

\maketitle

\begin{abstract}
In this article, we propose a new paradigm for training spiking neural networks (SNNs), {\it spike accumulation forwarding} ({\it SAF}). It is known that SNNs are energy-efficient but difficult to train. Consequently, many researchers have proposed various methods to solve this problem, among which online training through time (OTTT) is a method that allows inferring at each time step while suppressing the memory cost. However, to compute efficiently on GPUs, OTTT requires operations with spike trains and weighted summation of spike trains during forwarding. In addition, OTTT has shown a relationship with the Spike Representation, an alternative training method, though theoretical agreement with Spike Representation has yet to be proven. Our proposed method can solve these problems; namely, SAF can halve the number of operations during the forward process, and it can be theoretically proven that SAF is consistent with the Spike Representation and OTTT, respectively. Furthermore, we confirmed the above contents through experiments and showed that it is possible to reduce memory and training time while maintaining accuracy.
\end{abstract}

\section{Introduction}
\label{sec:introduction}

\begin{figure}[tb]
\centering
  \begin{minipage}[tb]{0.475\linewidth}
    \centering
    \includegraphics[keepaspectratio, scale=0.45]{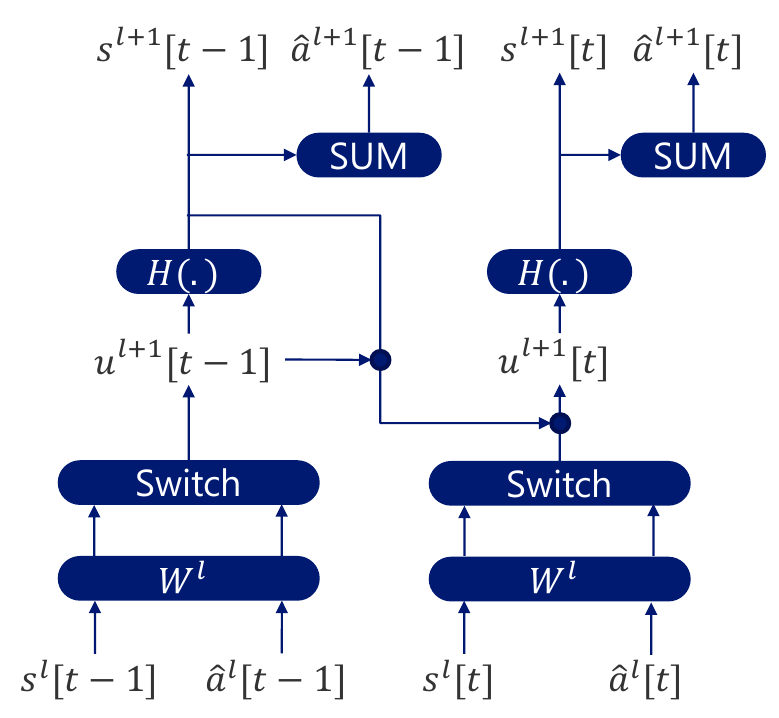}\subcaption{OTTT}
  \end{minipage}
  \begin{minipage}[tb]{0.475\linewidth}
    \centering
    \includegraphics[keepaspectratio, scale=0.45]{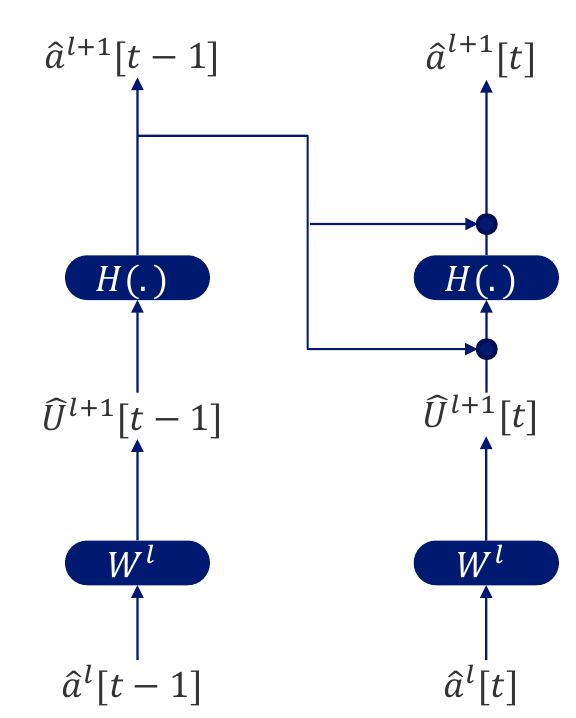} \subcaption{SAF}
  \end{minipage}
  \caption{Overview of OTTT and SAF training. OTTT requires operations with spike $\bm{s}^l[t]$ and spike accumulation $\widehat{\bm{a}}^l[t]$ during the forward process, while SAF requires operations with $\widehat{\bm{a}}^l[t]$. Also, $\bm{u}^l[t]$ represents the membrane potential, and $\widehat{\bm{U}}^{l+1}[t]$ represents the potential accumulation~(see Sec.~\ref{sec:saf}). The SUM layer computes the spike accumulation, and the Switch layer propagates $\bm{W}^l\bm{s}$ and $\bm{W}^l\widehat{\bm{a}}$ in forwarding and backwarding, respectively. Note that, unlike membrane potentials, the potential accumulation does not require the retention of past information.}
  \label{fig:overview}
\end{figure}

Due to  the carbon emission reduction problem, energy-efficient spiking neural networks~(SNNs) are attracting attention~\citep{luo2023achieving}. SNNs are known to be more bio-plausible models than artificial neural networks~(ANNs) and can replace the multiply-accumulate~(MAC) operations with additive operations. This characteristic comes from propagating the spike train~(belonging to $\{0,1\}^T$, where $T$ is the number of time steps) and is energy-efficient  on  neuromorphic chips~\citep{akopyan2015truenorth,davies2018loihi}.

Despite the usefulness of SNNs for ${\rm CO}_2$ reduction, their neurons are non-differentiable, which makes them difficult to train. Solving this problem is in the mainstream of SNN research, and back-propagation through time (BPTT) with surrogate gradient (SG)~\citep{zheng2021going,Xiao2022OnlineNetworks} is one of the main methods to achieve high performance. In particular, the recently proposed Online Training Through Time (OTTT) can train SNNs at each time step, just like our brains, and achieve high performance with few time steps~\citep{Xiao2022OnlineNetworks}. 

To enable online training, OTTT uses different information for the forward and backward processes. For forwarding, the spike train is used; for backwarding, the weighted summation of spike trains~(which we refer to as {\it spike accumulation}) is used. Therefore, efficient computation on GPUs using the Autograd of libraries such as PyTorch~\citep{paszke2019pytorch} requires operations with spike train and spike accumulation during the forward process~(see Fig.~\ref{fig:overview}). Meanwhile, OTTT has the theoretical guarantee that the gradient descent direction is similar to that of Spike Representations by the weighted firing rate coding by summing up the gradients of each time step~\citep{Xiao2021TrainingState,meng2022training}. However, these gradients are not shown to be perfectly consistent. To accurately bridge them, it is essential to develop a method that guarantees the gradient can be consistent with each of above two gradients.

In this article, we propose {\it Spike Accumulation Forwarding} ({\it SAF}) as a new paradigm for training SNNs. Unlike OTTT, SAF propagates not only backward but also forward processes by spike accumulation~(see Fig.~\ref{fig:overview}). By using this process, we can halve the number of operations during the forward process. In addition, because SAF does not require retaining the information of membrane potentials as in \citet{zhou2021temporal}, we can reduce memory usage during training compared to OTTT. Furthermore, this propagation strategy is only executed during training, and during inference, we can replace the propagation with the spike train without approximation error. We demonstrate this by proving that the neurons for spike accumulation are identical to the Leaky-Integrate-and-Fire~(LIF)~\citep{stein1965theoretical} neuron, which is a generalization of the Integrate-and-Fire~(IF)~\citep{lapique1907researches} neuron, which are commonly used as SNN neurons. This result indicates that the SNN composed of LIF neurons can achieve the same accuracy using the trained parameters of SAF (i.e., SAF is capable of inference by the SNN composed of LIF neurons). Furthermore, we prove that the gradient of the SAF is consistent with that of the OTTT, which trains at each time step, and that by summing up the gradients at each time step, SAF is also consistent with the gradient of the Spike Representation.
This shows that SAF can accurately bridge the gap between Spike Representation and OTTT.

{\bf Main Contributions}
\begin{itemize}
\item[(A)] We propose SAF, which trains SNNs by only spike accumulation, halving the number of operations in the forward process, reducing the memory cost, and enabling inference on SNNs composed of LIF neurons.  
\item[(B)] We prove theoretically that the neurons for spike accumulation are absolutely identical to the LIF neuron.  
\item[(C)] Our study also shows that the gradient of SAF is theoretically consistent with the gradient of the Spike Representation and also with that of OTTT, which trains each time step. 
\item[(D)] We consider the situation that SNNs have a feedforward or feedback connection like brain and discuss the equivalence of SAF with OTTT and with Spike Representation. 
\item[(E)] Brief experiments confirmed that for training at each time step, the training results were in close agreement with OTTT while reducing the training cost. 
\end{itemize}

\section{Related Work}
Regarding SNN training, there are two research directions: conversion from ANN to SNN and direct training. The conversion approach reuses the parameters of the ANN while converting the activation function for the spiking function~\citep{diehl2015fast,deng2020optimal,han2020rmp}. This approach can be employed by a wide range of many trained deep-learning models, and there are use cases for tasks other than recognition~\citep{kim2020spiking,qu2023spiking}. However, because the accuracy tends to be proportional to the number of time steps and although several improvement methods have been proposed~\citep{chowdhury2021one,wu2021tandem}, high-precision inference is still difficult for few time steps. Meanwhile, direct training does not use the parameters of the trained ANNs. Thus, the non-differentiable SNNs are trained by some approximation techniques. One of the most significant techniques is to utilize the surrogate gradient~(SG). SGs enables backpropagation by approximating the gradient of non-differentiable activation functions, and various types of SGs (e.g., rectangular or derivatives of sigmoid functions) have been proposed~\citep{shrestha2018slayer,wu2018spatio,ijcai2023p335,Suetake2022S3NN:Networks}. Other recently proposed methods 
include those based on the clamp function~\citep{meng2022training} or implicit differentiation on the equilibrium state~\citep{Xiao2021TrainingState}. These employ Spike representation, which propagates information such as firing rates, not spike trains, and have the advantage of being able to train SNNs like ANNs~\citep{thiele2019spikegrad,zhou2021temporal}. However, these methods assume the time step $T \rightarrow \infty$, then $T$ must be sufficiently large to achieve high accuracy. In addition, these are only differences in how to approximate; the basic approach is similar to SGs. Although there are bio-inspired training methods, such as Hebbian learning rule~\citep{hebb2005organization,fremaux2016neuromodulated} and spike timing dependent plasticity~(STBP)~\citep{bi1998synaptic,bengio2015stdp}. In particular, learning rules based on eligibility traces, such as SuperSpike~\citep{zenke2018superspike}, a method for improving SpikeProp~\citep{bohte2002error}, are associated with three-factor plasticity rules~\citep{neftci2019surrogate}. OTTT, the main focus of our study, has also been mentioned as being associated with the three-factor plasticity rule. However, training multi-layered SNNs with SuperSpike is challenging, and it is necessary to introduce local errors in each layer~\citep{kaiser2020synaptic}. Nonetheless, in \citet{kaiser2020synaptic}, fewer than ten layers were trained, and training deeper SNNs has still been challenging. On the other hand, OTTT can train deeper networks such as VGG, which is useful for many applications.

In the following, we discuss the OTTT~\citep{Xiao2022OnlineNetworks} most relevant to our study. Because OTTT is a variant of BPTT with SG, it allows for low-latency training. In addition, it is sufficient for OTTT to maintain the computational graph only for the current time step during training, different from the standard BPTT with SG. Thus, training can be performed with constant memory usage even as time steps increase. However, OTTT requires additional information for propagating the spike accumulation as well as the spike train for the forward process, which can increase training time. In addition, because OTTT is based on the LIF neuron, it must retain the membrane potential, which can increase memory usage. Furthermore, it is important to note that while OTTT and Spike Representation have similar gradient directions, their gradients do not necessarily match exactly (i.e., the inner product between their gradients is positive). 

\section{Preliminaries}
\subsection{Typical Neuron Model}
In this subsection, we explain the LIF neuron, which is widely used in SNNs. The LIF neuron is a neuron model that considers the leakage of the membrane potential, and its discrete computational form is as follows:
\begin{equation}
\begin{aligned}
\begin{cases}
\bm{u}^{l+1}[t] = \lambda(\bm{u}^{l+1}[t-1] - V_{\rm th} \, \bm{s}^{l+1}[t-1]) + \bm{W}^l \bm{s}^{l}[t] + \bm{b}^{l+1},\\
\bm{s}^{l+1}[t] = H(\bm{u}^{l+1}[t] - V_{\rm th}),
\end{cases}
\end{aligned}
\label{define-snn}\end{equation}
where $\bm{s}^l[t]$, $\bm{u}^l[t]$, $\bm{W}^l$ and $\bm{b}^l$ are the spike train, membrane potential, weight and bias of $l$-th layer, respectively. $\lambda \le 1$ is the leaky term, and $\lambda$ is set to 1 if we use the IF neuron, which is a special case of the LIF neuron. Also, $H$ is the element-wise Heaviside step function, that is, $H=1$ when the membrane potential $\bm{u}[t]$ exceeds the threshold $V_{\rm th}$. From this relation, the membrane potentials $\bm{u}^l[t]$ are computed sequentially and must retain the previous membrane potential $\bm{u}^l[t-1]$. This is the same in the case of OTTT, which uses the LIF neuron as described below.

\subsection{Training methods for SNNs}\label{Training_methods_for_SNNs}
This subsection introduces two training methods that are closely related to our method: Spike Representation and OTTT. 

\subsubsection*{Spike Representation}
Spike Representation is a method of training SNNs by propagating information differently to the spike trains~\citep{Xiao2021TrainingState,meng2022training}. In this article, we consider the weighted firing rate $\bm{a}[t] = \sum_{\tau = 0}^{t}\lambda^{t-\tau}\bm{s}[\tau] / \sum_{\tau = 0}^{t}\lambda^{t-\tau}$ as in \citet{Xiao2021TrainingState,meng2022training,Xiao2022OnlineNetworks}. Likewise, we define the weighted average input $\bm{m}[t] =  \sum_{\tau = 0}^{t}\lambda^{t-\tau}\bm{x}[\tau] / \sum_{\tau = 0}^{t}\lambda^{t-\tau}$ , where $x$ is the value of the input data. Then, given a convergent sequence $\bm{m}[t]\rightarrow \bm{m}^* \ (t \rightarrow \infty)$, it is known that $ \bm{a}[t] \rightarrow$ $\sigma \left(\bm{m}^*\right / V_{\rm th}) \ (t \rightarrow \infty)$  holds \citep{Xiao2021TrainingState}, where $\sigma$ is the element-wise clamp function $\sigma(x) = \min(\max(0,x),1)$. Using this convergence and under the assumption that the time step $T$ is sufficiently large, the weighted firing rate in the ($l+1$)-th layer is approximated as $\bm{a}^{l+1}[T] \approx \sigma(\bm{W}^l\bm{a}^l[T]+\bm{b}^{l+1} / V_{\rm th})$. We consider the loss $L$ as $ L[t] = \mathcal{L}(\bm{a}[T], y )$ (where  $\mathcal{L}$ is a convex function like cross-entropy and $y$ is the label). Then the gradient of $L$ with respect to $\bm{W}^l$ is computed as follows:
\begin{equation}
\left(\frac{\pd L}{\pd \bm{W}^l}\right)_{\rm SR} = \frac{\pd L}{\pd \bm{a}^{N}[T]}\left(\prod_{i=N-1}^{l+1}\frac{\pd \bm{a}^{i+1}[T]}{\partial \bm{a}^{i}[T]}\right)\frac{\pd\bm{a}^{l+1}[T]}{\pd\bm{W}^l},
\label{grad-SR}\end{equation}
where $N$ represents the number of layers. 

Spike Representation can include a feedforward or feedback connections, which have been frequently used in recent years~\citep{Xiao2021TrainingState,Xiao2022OnlineNetworks,Xiao2023SPIDE:Networks}, where notice that, feedforward and feedback connections do not refer to the weights between adjacent layers but to additional weight matrices connecting any layers $l$ and $l'$. When there is a feedforward connection, the gradient \eqref{grad-SR} holds. However, in the case of feedback connection, it does not hold because we need to calculate implicit differentiation. See Appendix \ref{subsec:Proof of Theorem3} for a detailed explanation.

\subsubsection*{Online Training Through Time}
OTTT~\citep{Xiao2022OnlineNetworks} is a training method based on BPTT with the surrogate gradient (SG). BPTT with SG enables low-latency training; however, during training, it requires the computational graph to be maintained at each time step, resulting in substantial memory usage  when a large number of time steps are involved. OTTT solves this problem and allows for training with minimal memory consumption.

In OTTT, for the forward process, (weighted) spike accumulation $\widehat{\bm{a}}[t] = \sum_{\tau = 0}^{t}\lambda^{t-\tau}\bm{s}[\tau]$ is propagated in addition to spike trains $\bm{s}[t]$ computed by the LIF neuron.
Defining the loss at each time step as $L[t] = \mathcal{L}(\bm{s}[t],y)/T$, OTTT computes the gradient at time $t$ as follows:
\begin{equation}
\left(\frac{\pd L[t]}{\pd \bm{W}^l} \right)_{\rm OT}= \widehat{\bm{a}}^l[t]\, \frac{\pd L[t]}{\pd \bm{s}^{N}[t]}\left(\prod_{i=N-1}^{l+1}\frac{\pd \bm{s}^{i+1}[t]}{\partial \bm{s}^{i}[t]}\right)\frac{\pd \bm{s}^{l+1}[t]}{\pd \bm{u}^{l+1}[t]}.
\label{grad-OTTT}\end{equation}
Note that since this equation uses the accumulated information of spikes up to $\widehat{a}^l[t]$ at the current time $t$, OTTT implicitly trains using information up to $t$, not just the current time. This corresponds to the fact that membrane potentials accumulate information from the past. Note also that the term $\pd \bm{s} / \pd \bm{u}$ is non-differentiable at $u = V_{\rm th}$; thus, we approximate it with the SG; for example, since $\pd \bm{s}^{i+1}[t] / \pd \bm{s}^i[t]$ can decomposed into $(\pd \bm{s}^{i+1}[t] / \pd  \bm{u}^{i+1}[t] )(\bm{u}^{i+1}[t] / \pd \bm{s}^i[t])$, we also replace $\pd \bm{s}^{i+1}[t] / \pd  \bm{u}^{i+1}[t]$ with the SG. Like the case of Spike Representation, OTTT can  include a feedforward or feedback connection, and the gradients are almost the same as \eqref{grad-OTTT}.

\citet{Xiao2022OnlineNetworks} proposed two types of training approaches: OTTT$_{\rm O}$, where parameters are updated at each time step using $\pd L[t] / \pd \bm{W}^l$, and OTTT$_{\rm A}$, where parameters are updated collectively by summing $\pd L[t] / \pd \bm{W}^l$ up to $T$. 
 
In particular, they proved that the gradient descent directions in OTTT$_{\rm A}$ and Spike Representation are similar, i.e., the inner product between their gradients is positive.

\section{Spike Accumulation Forwarding}
\label{sec:saf}
In this section, we introduce our proposed method, SAF, which only propagates (weighted) spike accumulation $\widehat{\bm{a}}[t]$. We first explain the forward and backward processes. Then, we prove that SAF can be consistent with OTTT and Spike Representation. We also show that the feedback connection can be added to SAF, as in~\citet{Xiao2022OnlineNetworks}, and furthermore, feedforward connections can also be incorporated into the SAF. For summaries of the main formulas, see Appendix~\ref{list_of_main_formulas}.

\subsection{Details of SAF}
\subsubsection*{Forward process}
As mentioned earlier, for the forward processes of conventional SNNs, the spike trains $\bm{s}[t]$ are propagated. In other words, the firing state of the spike for each neuron at each time step is retained. In SAF, as in OTTT and SR, instead of the spike trains, it propagates (weighted) spike accumulation $\widehat{\bm{a}}[t] = \sum_{\tau = 0}^{t}\lambda^{t-\tau}\bm{s}[\tau]$, meaning that it retains the (weighted) count of the fired spikes up to the current time for each neuron.
Additionally, although in conventional SNNs, the spike firing is determined with the membrane potential $\bm{u}[t]$, in SAF, it is determined with (weighted) potential accumulation $\widehat{\bm{U}}[t]$ defined by $ \widehat{\bm{U}}^{l+1}[t] = \lambda\widehat{\bm{U}}^{l+1}[t-1] + \bm{W}^l(\widehat{\bm{a}}^l[t]-\lambda\widehat{\bm{a}}^l[t-1]) + \bm{b}^{l+1}$, which corresponds to the membrane potential in the relation~\eqref{define-snn}. 
With these considerations, SAF is updated as follows:
\begin{equation}
\begin{aligned}
\begin{cases}
\widehat{\bm{U}}^{l+1}[t] = \bm{W}^l \widehat{\bm{a}}^l[t] + \bm{b}^{l+1}\sum_{\tau = 0}^{t-1}\lambda^{t-\tau} + \lambda^{t}\widehat{\bm{U}}^{l+1}[0],\vspace{2pt}\\
\widehat{\bm{a}}^{l+1}[t] = \lambda\widehat{\bm{a}}^{l+1}[t-1] + H(\widehat{\bm{U}}^{l+1}[t] -  V_{\rm th}(\lambda\widehat{\bm{a}}^{l+1}[t-1] +1)),
\end{cases}
\end{aligned}
\label{SAF-Def}\end{equation}
where $ \widehat{\bm{U}}^{l+1}[0]$ is the initial value for potential accumulation, and here, we assume it to be the initial membrane potential $ \bm{u}^{l+1}[0]$.
Here, the membrane potential $\bm{u}^{l+1}[t]$ and spike trains $\bm{s}^{l+1}[t]$ in the LIF model can be expressed by $\widehat{\bm{U}}^{l+1}[t]$ and $\widehat{\bm{a}}^{l+1}[t-1]$ as follows:
\begin{equation}
\begin{aligned}
\begin{cases}
\bm{u}^{l+1}[t] = \widehat{\bm{U}}^{l+1}[t] - V_{\rm th} \,\lambda\widehat{\bm{a}}^{l+1}[t-1], \\
\bm{s}^{l+1}[t] = H(\widehat{\bm{U}}^{l+1}[t] -  V_{\rm th}(\lambda\widehat{\bm{a}}^{l+1}[t-1] +1)) .
\end{cases}
\end{aligned}
\label{eq:SAF-LIF}\end{equation}
For derivations of \eqref{SAF-Def} and \eqref{eq:SAF-LIF}, refer to Appendix~\ref{subsection_derivation_4_5}. 
Note that, as shown in \eqref{SAF-Def}, SAF does not need to retain the past potential accumulation $\widehat{\bm{U}}^{l+1}[t-1]$. Meanwhile, the various SNNs, including OTTT, require the LIF neurons used for training to retain the previous membrane potentials $\bm{u}^{l+1}[t-1]$, as described above. Therefore, SAF can reduce the memory usage for the forward process compared to OTTT. 

As a result, it becomes possible to compute $\bm{u}^{l+1}[t]$ and $\bm{s}^{l+1}[t]$ during the process of obtaining $\widehat{\bm{U}}^{l+1}[t]$ and $\widehat{\bm{a}}^{l+1}[t]$. Because it is possible to compute $\widehat{\bm{U}}^{l+1}[t]$ and $\widehat{\bm{a}}^{l+1}[t]$ from $\bm{u}^{l+1}[t]$ and $\bm{s}^{l+1}[\tau]$ ($\tau = 1,\ldots,t$), the forward processes of SAF and SNN composed of LIF neurons are mutually convertible.  Additionally, because the IF neuron is a special case of an LIF neuron (i.e., $\lambda=1$), the forward processes of SAF, when $\lambda=1$, and SNN composed of IF neurons are mutually convertible. Furthermore, in OTTT, both $\bm{s}[t]$ and $\widehat{\bm{a}}[t]$ need to be propagated during the forward process for efficient GPU computation, whereas in SAF, only $\widehat{\bm{a}}[t]$ needs to be propagated~(see Fig.~\ref{fig:overview}).
Therefore, SAF can reduce the computation time during training. 

\subsubsection*{Backward process}
As with OTTT, SAF can be trained in two different ways. The first method updates the parameters by computing the gradient at each time step. We call this {\it SAF-E}. Let $L_E[t]=\mathcal{L}(\bm{s}^N[t],\bm{y})/T$ be the loss function.
Assuming that $L_E[t]$ depends only on $\widehat{\bm{a}}^l[t]$ and $\widehat{\bm{U}}^l[t]$, i.e., not on anything up to $t-1$, we calculate the derivative based on the definition of forward propagation as
\begin{align}
\frac{\partial L_E[t]}{\partial \bm{W}^l}&=\widehat{\bm{a}}^l[t]\,\frac{\partial L_E[t]}{\partial \widehat{\bm{a}}^N[t]}\left(\prod_{i=N-1}^{l+1}\frac{\partial \widehat{\bm{a}}^{i+1}[t]}{\partial \widehat{\bm{a}}^{i}[t]}\right)\frac{\partial \widehat{\bm{a}}^{l+1}[t]}{\partial \widehat{\bm{U}}^{l+1}[t]} .\label{grad-SAF-o2}
\end{align}
Note that $\partial \widehat{\bm{a}} / \partial \widehat{\bm{U}}$ is non-differentiable; we approximate it with SG. Detailed calculations are given in Appendix~\ref{subsec:6_7}. Here, we set
\begin{align}
\bm{g}_{\widehat{\bm{U}}}^{l+1}[t]
&=\frac{\partial L_E[t]}{\partial \widehat{\bm{a}}^N[t]}\left(\prod_{i=N-1}^{l+1}\frac{\partial \widehat{\bm{a}}^{i+1}[t]}{\partial \widehat{\bm{a}}^{i}[t]}\right)\frac{\partial \widehat{\bm{a}}^{l+1}[t]}{\partial \widehat{\bm{U}}^{l+1}[t]}.
\end{align}
Then, it holds that
\begin{align}
\frac{\partial L_E[t]}{\partial \bm{W}^l}&=\widehat{\bm{a}}^l[t]\,\bm{g}_{\widehat{\bm{U}}}^{l+1}[t]. \label{gr-SAF-E}
\end{align}
The second method calculates the gradient only at the final time step and updates the parameters. We call this {\it SAF-F}. Let $L_F=\mathcal{L}(\sum_{t =0}^{T}\lambda^{T-t}\bm{s}^N[t] / \sum_{t =0}^{T}\lambda^{T-t},\bm{y})$ be a loss function. As with SAF-E, suppose that $L_F$ depends only on $\widehat{\bm{a}}^l[T]$ and $\widehat{\bm{U}}^l[T]$. Simply replacing $t$ with $T$ and $L_E$ with $L_F$ in the above calculation, we obtain
\begin{align}
\frac{\partial L_F}{\partial \bm{W}^l}&=\widehat{\bm{a}}^l[T]\,\bm{g}_{\widehat{\bm{U}}}^{l+1}[T]. \label{gr-SAF-F}
\end{align}

\subsection{Equivalence with OTTT$_{\rm O}$ and Spike Representation } \label{subsec_equivalence}

In this subsection, we show that SAF-E is equivalent to OTTT$_{\rm O}$ and SAF-F is equivalent to Spike Representation, i.e., the forward and backward processes are consistent, respectively. This means that we can train SNNs by SAF and infer by LIF neurons.
\subsubsection*{Equivalence with OTTT$_{\rm O}$}

We will transform the gradient of SAF-E to be consistent with that of OTTT$_{\rm O}$ when the loss function is $L_E[t]$.
Because $L_E[t]$ does not include any argument before $t$, we obtain 
\begin{align}
\frac{\partial L_E[t]}{\partial \widehat{\bm{a}}^N[t]}=\frac{\partial L_E[t]}{\partial \bm{s}^N[t]}.\label{equal-1}
\end{align}
The following two equations hold from the  forward processes of SAF and OTTT: 
\begin{align}
\frac{\partial \widehat{\bm{a}}^{i+1}[t]}{\partial \widehat{\bm{a}}^{i}[t]}=\frac{\partial \bm{s}^{i+1}[t]}{\partial \bm{s}^{i}[t]}, \quad
\frac{\partial \widehat{\bm{a}}^{l+1}[t]}{\partial \widehat{\bm{U}}^{l+1}[t]}=\frac{\partial \bm{s}^{l+1}[t]}{\partial \bm{u}^{l+1}[t]}. \label{equal-2}
\end{align}
By substituting \eqref{equal-1}, \eqref{equal-2} for $\bm{g}_{\widehat{\bm{U}}}^{l+1}$ in \eqref{gr-SAF-E}, we have
\begin{align}
\bm{g}_{\widehat{\bm{U}}}^{l+1}[t] &=\frac{\partial L_E[t]}{\partial \bm{s}^N[t]}\left(\prod_{i=N-1}^{l+1}\frac{\partial \bm{s}^{i+1}[t]}{\partial \bm{s}^{i}[t]}\right)\frac{\partial \bm{s}^{l+1}[t]}{\partial \bm{u}^{l+1}[t]}.
\end{align}

Hence, the following theorem holds from \eqref{grad-OTTT}.
\begin{thm}
The backward processes of SAF-E and OTTT$_{\rm O}$ are identical, that is, $\dfrac{\partial L_E[t]}{\partial \bm{W}^l}=\left(\dfrac{\partial L_E[t]}{\partial \bm{W}^l}\right)_{\rm OT}$. 
\label{thm1}\end{thm}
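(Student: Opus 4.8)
The plan is to obtain the theorem as an immediate consequence of the two gradient formulas already in hand, namely the SAF-E gradient \eqref{gr-SAF-E} and the OTTT$_{\rm O}$ gradient \eqref{grad-OTTT}, once the SAF backward quantity $\bm{g}_{\widehat{\bm{U}}}^{l+1}[t]$ has been rewritten in terms of the spike-based factors. Concretely, I would start from \eqref{gr-SAF-E}, $\partial L_E[t]/\partial \bm{W}^l = \widehat{\bm{a}}^l[t]\,\bm{g}_{\widehat{\bm{U}}}^{l+1}[t]$, substitute the expression for $\bm{g}_{\widehat{\bm{U}}}^{l+1}[t]$ produced by \eqref{equal-1} and \eqref{equal-2}, and then read off that the resulting product coincides term-by-term with the right-hand side of \eqref{grad-OTTT}. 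Since both gradients carry the common prefactor $\widehat{\bm{a}}^l[t]$, the entire argument reduces to matching $\bm{g}_{\widehat{\bm{U}}}^{l+1}[t]$ against the chain-rule product appearing in \eqref{grad-OTTT}.

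The substantive work therefore lies entirely in the three identities \eqref{equal-1} and \eqref{equal-2}, which bridge the SAF variables $(\widehat{\bm{a}},\widehat{\bm{U}})$ and the LIF/OTTT variables $(\bm{s},\bm{u})$. For \eqref{equal-1} I would use that $L_E[t]=\mathcal{L}(\bm{s}^N[t],\bm{y})/T$ together with $\bm{s}^N[t]=\widehat{\bm{a}}^N[t]-\lambda\widehat{\bm{a}}^N[t-1]$; because $L_E[t]$ has no dependence on times before $t$, the factor $\lambda\widehat{\bm{a}}^N[t-1]$ is held fixed, so $\partial\bm{s}^N[t]/\partial\widehat{\bm{a}}^N[t]=1$ and the two derivatives agree. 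For the first identity in \eqref{equal-2} I would argue analogously at each layer: at fixed $t$, the pair $\widehat{\bm{a}}^{i+1}[t]$ and $\bm{s}^{i+1}[t]$ differ only by the constant $\lambda\widehat{\bm{a}}^{i+1}[t-1]$, and likewise $\widehat{\bm{a}}^{i}[t]$ and $\bm{s}^{i}[t]$ differ only by $\lambda\widehat{\bm{a}}^{i}[t-1]$, so the Jacobian of $\widehat{\bm{a}}^{i+1}[t]$ with respect to $\widehat{\bm{a}}^{i}[t]$ equals that of $\bm{s}^{i+1}[t]$ with respect to $\bm{s}^{i}[t]$. The second identity in \eqref{equal-2} follows from \eqref{eq:SAF-LIF}: since $\bm{u}^{l+1}[t]=\widehat{\bm{U}}^{l+1}[t]-V_{\rm th}\lambda\widehat{\bm{a}}^{l+1}[t-1]$ with the past term fixed, the firing map $H$ is evaluated at the same argument whether differentiated through $\widehat{\bm{U}}^{l+1}[t]$ or through $\bm{u}^{l+1}[t]$, so the surrogate derivatives coincide.

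The main obstacle is thus not the final substitution but making rigorous the ``hold the past fixed'' step underlying all three identities: one must confirm that treating every $[t-1]$ quantity as an input that does not depend on the time-$t$ variables is exactly the per-time-step computational-graph semantics assumed for both SAF-E and OTTT$_{\rm O}$, mirroring the membrane-potential recursion in \eqref{define-snn} and the accumulation recursion in \eqref{SAF-Def}. Once that semantics is fixed, each identity is a one-line derivative computation and the theorem follows by substitution. I would additionally verify that the surrogate-gradient replacement for the non-differentiable $\partial\widehat{\bm{a}}/\partial\widehat{\bm{U}}$ and $\partial\bm{s}/\partial\bm{u}$ is applied consistently on both sides, since the equality $\partial\widehat{\bm{a}}^{l+1}[t]/\partial\widehat{\bm{U}}^{l+1}[t]=\partial\bm{s}^{l+1}[t]/\partial\bm{u}^{l+1}[t]$ requires the same surrogate to be used for both models at the matching argument.
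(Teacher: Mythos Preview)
Your proposal is correct and follows essentially the same route as the paper: both reduce the theorem to the three identities \eqref{equal-1} and \eqref{equal-2}, then substitute into $\bm{g}_{\widehat{\bm{U}}}^{l+1}[t]$ and match with \eqref{grad-OTTT}. The only cosmetic difference is that the paper justifies each identity by explicitly computing both sides (writing $\partial \widehat{\bm{a}}^{l+1}[t]/\partial \widehat{\bm{U}}^{l+1}[t]$ and $\partial \bm{s}^{l+1}[t]/\partial \bm{u}^{l+1}[t]$ as the same delta function via \eqref{eq:SAF-LIF}, and decomposing $\partial \widehat{\bm{a}}^{i+1}[t]/\partial \widehat{\bm{a}}^{i}[t]$ and $\partial \bm{s}^{i+1}[t]/\partial \bm{s}^{i}[t]$ through the chain rule to exhibit the common factor $\bm{W}^i$), whereas you invoke the equivalent ``shift by a $[t-1]$ constant'' observation; your remarks on the per-time-step graph semantics and consistent SG usage are exactly the assumptions the paper relies on.
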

A detailed proof is given in Appendix~\ref{subsec_thm1}. Because we have already confirmed that the forward process is consistent, SAF-E and OTTT$_{\rm O}$ are equivalent.
\subsubsection*{Equivalence with Spike Representation}
Now, we show that SAF-F is equivalent to Spike Representation. Setting the loss function of Spike Representation as $L_F$, from the expression \eqref{grad-SR}, we have
\begin{align}
\left(\frac{\pd L_F}{\pd \bm{W}^l}\right)_{\rm SR}=\frac{\widehat{\bm{a}}^l[T]}{V_{\rm th}}\left(\frac{\partial L}{\partial \widehat{\bm{a}}^N[T]}\left(\prod_{i=N-1}^{l+1}\frac{\partial \widehat{\bm{a}}^{i+1}[T]}{\partial \widehat{\bm{a}}^{i}[T]}\right)\odot\bm{d}^{l+1}[T]^{\top}\right),
\end{align}
where $\bm{d}^{l+1}[T]=\sigma'\left((\bm{W}^l\widehat{\bm{a}}^l[T] / \Lambda +\bm{b}^{l+1}) / V_{\rm th} \right)$,
$\Lambda = \sum_{\tau=0}^T\lambda^{T-\tau}$, and $\odot$ is the element-wise product.
Now we assume that
\begin{align}
\frac{\partial \bm{s}^{l+1}[T]}{\partial \bm{u}^{l+1}[T]}=\operatorname{diag}(\bm{d}^{l+1}[T]),
\end{align}
for any $l=0,\ldots,N-1$, where $\operatorname{diag}(\bm{d}^{l+1}[T])$ is a diagonal matrix constructed from $\bm{d}^{l+1}[T]$. The reason why this assumption is valid discussed in~\citet{Xiao2022OnlineNetworks}.
Then, we obtain
\begin{align}
\left(\frac{\partial L_F}{\partial \bm{W}^l}\right)_{\rm SR}&=\frac{1}{V_{\rm th}}\widehat{\bm{a}}^l[T]\,\bm{g}_{\widehat{\bm{U}}}^{l+1}[T].
\end{align}
Hence, the following theorem holds.
\begin{thm} \label{thm2}
Suppose that $\bm{m}[t]$ converges when $t\rightarrow\infty$. Then, for sufficiently large $T$, the backward processes of SAF-F and Spike Representation are identical up to a scale factor, that is,
$\dfrac{\partial L_F}{\partial \bm{W}^l}=V_{\rm th}\left(\dfrac{\partial L_F}{\partial \bm{W}^l}\right)_{\rm SR}$. 
\end{thm}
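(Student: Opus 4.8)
The plan is to start from the Spike Representation gradient \eqref{grad-SR}, rewrite every factor in terms of the spike accumulation $\widehat{\bm{a}}$ rather than the weighted firing rate $\bm{a}$, and then match the result against the SAF-F gradient \eqref{gr-SAF-F}. The single structural fact driving everything is the layerwise identity $\bm{a}^j[T]=\widehat{\bm{a}}^j[T]/\Lambda$ with $\Lambda=\sum_{\tau=0}^T\lambda^{T-\tau}$, where it is crucial that $\Lambda$ does not depend on the layer index $j$. First I would record this identity together with the large-$T$ fixed-point relation $\bm{a}^{l+1}[T]\approx\sigma((\bm{W}^l\bm{a}^l[T]+\bm{b}^{l+1})/V_{\rm th})$, which is exactly where the hypothesis that $\bm{m}[t]$ converges and the phrase ``for sufficiently large $T$'' enter.

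Next I would convert the three types of factors in \eqref{grad-SR} one at a time. Because $\Lambda$ is common to consecutive layers, it cancels in every internal Jacobian, giving $\partial\bm{a}^{i+1}[T]/\partial\bm{a}^i[T]=\partial\widehat{\bm{a}}^{i+1}[T]/\partial\widehat{\bm{a}}^i[T]$. The loss derivative contributes one factor of $\Lambda$, namely $\partial L/\partial\bm{a}^N[T]=\Lambda\,\partial L/\partial\widehat{\bm{a}}^N[T]$, while differentiating the clamp relation with respect to $\bm{W}^l$ produces $\partial\bm{a}^{l+1}[T]/\partial\bm{W}^l=\bm{d}^{l+1}[T]\odot(\widehat{\bm{a}}^l[T]/\Lambda)/V_{\rm th}$ with $\bm{d}^{l+1}[T]=\sigma'((\bm{W}^l\widehat{\bm{a}}^l[T]/\Lambda+\bm{b}^{l+1})/V_{\rm th})$. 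Collecting these, the stray $\Lambda$ from the loss term cancels the $1/\Lambda$ from the weight term, leaving exactly the displayed expression for $(\partial L_F/\partial\bm{W}^l)_{\rm SR}$ with a single surviving factor of $1/V_{\rm th}$.

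Then I would invoke the surrogate-gradient identification $\partial\bm{s}^{l+1}[T]/\partial\bm{u}^{l+1}[T]=\operatorname{diag}(\bm{d}^{l+1}[T])$ (justified in \citet{Xiao2022OnlineNetworks}) together with the layerwise equalities \eqref{equal-2} to recognise the bracketed quantity as $\bm{g}_{\widehat{\bm{U}}}^{l+1}[T]$, so that $(\partial L_F/\partial\bm{W}^l)_{\rm SR}=\tfrac{1}{V_{\rm th}}\widehat{\bm{a}}^l[T]\,\bm{g}_{\widehat{\bm{U}}}^{l+1}[T]$. Comparing this with \eqref{gr-SAF-F} immediately yields $\partial L_F/\partial\bm{W}^l=V_{\rm th}(\partial L_F/\partial\bm{W}^l)_{\rm SR}$, which is the claim.

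I expect the main obstacle to be the rigorous justification of the large-$T$ fixed-point relation at every layer, rather than the $\Lambda$-bookkeeping. The convergence hypothesis is stated only for the input sequence $\bm{m}[t]$, whereas the clamp approximation must hold simultaneously for all layers $l=0,\ldots,N-1$. The careful step is therefore an induction on the layer index: assuming $\bm{a}^l[t]$ converges, one uses continuity of $\sigma$ and the defining recursion to show that $\bm{a}^{l+1}[t]$ converges to $\sigma((\bm{W}^l\bm{a}^l[\infty]+\bm{b}^{l+1})/V_{\rm th})$, thereby propagating the hypothesis through the network and forcing the approximation error to vanish as $T\to\infty$. A secondary point to handle cleanly is the non-differentiability of $\sigma$ at the clamp boundaries, which is absorbed into the surrogate-gradient assumption exactly as in the OTTT analysis.
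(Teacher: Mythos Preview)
Your proposal is correct and follows essentially the same route as the paper: rewrite the Spike Representation gradient via the identity $\bm{a}^j[T]=\widehat{\bm{a}}^j[T]/\Lambda$, let the $\Lambda$'s cancel between the loss factor and the weight factor, then use the clamp approximation and the surrogate-gradient identification $\partial\bm{s}^{l+1}/\partial\bm{u}^{l+1}=\operatorname{diag}(\bm{d}^{l+1}[T])$ together with \eqref{equal-2} to recognise $\bm{g}_{\widehat{\bm{U}}}^{l+1}[T]$. The only notable difference is that the paper simply \emph{assumes} the layerwise clamp relation $\bm{a}^{l+1}[T]\approx\sigma((\bm{W}^l\bm{a}^l[T]+\bm{b}^{l+1})/V_{\rm th})$ at every layer (citing \citet{Xiao2021TrainingState}), whereas you propose to justify it by induction on the layer index; that extra care is fine but not required to match the paper's argument.
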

See Appendix~\ref{Proof of Theorem2} for the complete proof. Because we have already confirmed that the forward process is consistent, SAF-F and Spike Representation are equivalent.

\subsection{Feedforward and Feedback Connection}
In the brain, there are not only layer stacking, as in normal SNNs, but also feedforward and feedback connections~\citep{semedo2022feedforward}. Therefore, it is important to consider SAF-E/F including these connections. In this subsection, we discuss the relationship between ${\rm OTTT_{O}}$, SR, and SAF-E/F in the context of both feedforward and feedback connections.
\subsubsection*{Feedforward Connection}
To begin, we consider the SNN composed of LIF neurons.
The forward process of the ($q+1$)-th layer of the SNN with a feedforward connection from the $p$-th layer to the ($q+1$)-th layer (where $q \geq p$) with weight $\bm{W}_f$ is as follows:
\begin{equation}
\begin{aligned}
\begin{cases}
\bm{u}^{q+1}[t] = \lambda(\bm{u}^{q+1}[t-1] - V_{\rm th}\,\bm{s}^{q+1}[t-1]) + \bm{W}^q \bm{s}^{q}[t] + \bm{b}^{q+1}+\bm{W}_f\bm{s}^p[t],\\
\bm{s}^{q+1}[t] = H(\bm{u}^{q+1}[t]- V_{\rm th}).
\end{cases}
\end{aligned}\label{define-snn-F}
\end{equation}
Note that the layers other than the ($q+1$)-th layer are the same as in~\eqref{define-snn}. Meanwhile, the forward process of the ($q+1$)-th layer of the SAF is as follows: 
\begin{equation}
\begin{aligned}
\begin{cases}
\widehat{\bm{U}}^{q+1}[t]= \bm{W}^q \widehat{\bm{a}}^q[t] +\bm{b}^{q+1}(\sum_{\tau=0}^{t-1}\lambda^{\tau}) + \lambda^{t}\widehat{\bm{U}}^{q+1}[0]+\bm{W}_f \, \widehat{\bm{a}}^p[t],\\
\widehat{\bm{a}}^{q+1}[t] = \lambda\widehat{\bm{a}}^{q+1}[t-1]+ H(\widehat{\bm{U}}^{q+1} -  V_{\rm th}(\lambda\widehat{\bm{a}}^{q+1}[t-1]+1)).
\end{cases}
\end{aligned}\label{SAF-Def-F}
\end{equation}
The layers other than the ($q+1$)-th layer are the same as in~\eqref{SAF-Def}. The forward processes of SAF and LIF with feedforward connection are mutually convertible.\\
Regarding the backward processes, the gradients for parameters other than $\bm{W}_f$ are the same as when there is no feedforward connection. The derivative with respect to $\bm{W}_f$ is calculated as 
\begin{align}
\frac{\partial L_E[t]}{\partial \bm{W}_f}&=\widehat{\bm{a}}^p[t]\,\frac{\partial L_E[t]}{\partial \widehat{\bm{a}}^N[t]}\left(\prod_{i=N-1}^{q+1}\frac{\partial \widehat{\bm{a}}^{i+1}[t]}{\partial \widehat{\bm{a}}^{i}[t]}\right)\frac{\partial \widehat{\bm{a}}^{q+1}[t]}{\partial \widehat{\bm{U}}^{q+1}[t]}.
\end{align}
Therefore, $\partial L_E[t] / \partial \bm{W}_{f}=\widehat{\bm{a}}^p[t]\,\bm{g}_{\widehat{\bm{U}}}^{q+1}[t]$ for SAF-E and $\partial L_F / \partial \bm{W}_{f}=\widehat{\bm{a}}^p[T]\,\bm{g}_{\widehat{\bm{U}}}^{q+1}[T]$ for SAF-F.
\subsubsection*{Feedback Connection}
 The forward process of the ($q+1$)-th layer of the SNN with a feedback connection from the $p$-th layer to the ($q+1$)-th layer (where $q<p$) with weight $\bm{W}_b$ is as follows:
\begin{equation}
\begin{aligned}
\begin{cases}
\bm{u}^{q+1}[t] = \lambda(\bm{u}^{q+1}[t-1] - V_{\rm th} \,\bm{s}^{q+1}[t-1]) + \bm{W}^q \bm{s}^{q}[t] + \bm{b}^{q+1}+\bm{W}_b \,\bm{s}^p[t-1],\\
\bm{s}^{q+1}[t] = H(\bm{u}^{q+1}[t]- V_{\rm th}).
\end{cases}
\end{aligned}\label{define-snn-B}
\end{equation}
Note that the layers other than the ($q+1$)-th layer are the same as in \eqref{define-snn}. Meanwhile, the forward process of the ($q+1$)-th layer of SAF is as follows: 
\begin{equation}
\begin{aligned}
\begin{cases}
\widehat{\bm{U}}^{q+1}[t] 
= \bm{W}^q \widehat{\bm{a}}^q[t] +\bm{b}^{q+1}(\sum_{\tau=0}^{t-1}\lambda^{\tau}) + \lambda^{t}\widehat{\bm{U}}^{q+1}[0]+\bm{W}_b \,\widehat{\bm{a}}^p[t-1],\vspace{2pt}\\
\widehat{\bm{a}}^{q+1}[t] = \lambda\widehat{\bm{a}}^{q+1}[t-1]+ H(\widehat{\bm{U}}^{q+1} -  V_{\rm th}(\lambda\widehat{\bm{a}}^{q+1}[t-1]+1)).
\end{cases}
\end{aligned}\label{SAF-Def-B}
\end{equation}
The layers other than the ($q+1$)-th layer are the same as in~\eqref{SAF-Def}. The forward processes of SAF and LIF with feedback connection are mutually convertible.\\
Regarding the backward processes, the gradients for parameters other than $\bm{W}_b$ are the same as when there is no feedback connection. The derivative with respect to $\bm{W}_b$ is calculated as
\begin{align}
\frac{\partial L_E[t]}{\partial \bm{W}_b}&=\widehat{\bm{a}}^p[t-1]\,\frac{\partial L_E[t]}{\partial \widehat{\bm{a}}^N[t]}\left(\prod_{i=N-1}^{q+1}\frac{\partial \widehat{\bm{a}}^{i+1}[t]}{\partial \widehat{\bm{a}}^{i}[t]}\right)\frac{\partial \widehat{\bm{a}}^{q+1}[t]}{\partial \widehat{\bm{U}}^{q+1}[t]}.
\end{align}
Therefore, we obtain $\partial L_E[t] / \partial \bm{W}_b=\widehat{\bm{a}}^p[t-1]\,\bm{g}_{\widehat{\bm{U}}}^{q+1}[t]$ for SAF-E and $\partial L_F / \partial \bm{W}_b=\widehat{\bm{a}}^p[T-1]\,\bm{g}_{\widehat{\bm{U}}}^{q+1}[T]$ for SAF-F. 
\subsubsection*{Equivalence with OTTT$_{\rm O}$ and Spike Representation}
We can show the equivalence of SAF-E and OTTT$_{\rm O}$ with a feedforward connection, or with a feedback connection, as well as in Theorem~\ref{thm1}. 
Moreover, as mentioned in Sec.~\ref{Training_methods_for_SNNs}, Spike Representation computes the gradients as \eqref{grad-SR}, then the equivalence of SAF-F and Spike Representation also holds, even with a feedforward connection.
\begin{cor} \label{corollary3}
For SNN with a feedforward connection \eqref{define-snn-F}, or a feedback connection \eqref{define-snn-B}, the following hold.
\item $\mathrm{(i)}$ The backward processes of SAF-E and OTTT$_{\rm O}$ with a feedforward connection are identical.
\item $\mathrm{(ii)}$ Suppose that $\bm{m}[t]$ converges when $t\rightarrow\infty$. Then, for sufficiently large $T$, the backward processes of SAF-F and Spike Representation with a feedforward connection are identical up to a scale factor.
\item $\mathrm{(iii)}$ The backward processes of SAF-E and OTTT$_{\rm O}$ with a feedback connection are identical. 
\end{cor}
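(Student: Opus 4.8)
The plan is to reduce all three claims to the arguments already used for Theorems~\ref{thm1} and~\ref{thm2}, the only new ingredient being the single elementary derivative contributed by the extra weight matrix. The forward processes are already mutually convertible in each case (apply the layerwise change of variables in \eqref{eq:SAF-LIF}), so it suffices to match the backward computations. The master observation is that $\bm{W}_f$ (resp.\ $\bm{W}_b$) enters the SAF recursion in \eqref{SAF-Def-F} (resp.\ \eqref{SAF-Def-B}) and the LIF recursion in \eqref{define-snn-F} (resp.\ \eqref{define-snn-B}) linearly and in exactly the same position. Consequently the two computation graphs have identical structure and identical elementary partial derivatives: the intra-stack relations in \eqref{equal-1} and \eqref{equal-2} are untouched away from the $(q{+}1)$-th layer, and at that layer the firing rule is unchanged, so $\partial \widehat{\bm{a}}^{q+1}[t]/\partial \widehat{\bm{U}}^{q+1}[t]=\partial \bm{s}^{q+1}[t]/\partial \bm{u}^{q+1}[t]$ still holds. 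By the chain rule, identical elementary factors combined over identical graphs give identical total gradients for every parameter.

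For (i) I would make the new edge explicit: linearity gives $\partial \widehat{\bm{U}}^{q+1}[t]/\partial \widehat{\bm{a}}^p[t]=\bm{W}_f=\partial \bm{u}^{q+1}[t]/\partial \bm{s}^p[t]$, which combined with the preserved relation above yields $\partial \widehat{\bm{a}}^{q+1}[t]/\partial \widehat{\bm{a}}^p[t]=\partial \bm{s}^{q+1}[t]/\partial \bm{s}^p[t]$. Since the SAF-E gradient for $\bm{W}_f$ has already been derived as $\widehat{\bm{a}}^p[t]\,\bm{g}_{\widehat{\bm{U}}}^{q+1}[t]$ and $\widehat{\bm{a}}^p[t]$ is exactly the presynaptic trace multiplying the weight gradient in OTTT$_{\rm O}$ (cf.~\eqref{grad-OTTT}), the two gradients for $\bm{W}_f$ coincide, and for every other weight the chain rule combines the same matched factors, so the argument of Theorem~\ref{thm1} extends directly. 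For (iii) the same steps go through with $\bm{W}_b$ and with the presynaptic quantity taken at time $t{-}1$, giving $\partial \widehat{\bm{U}}^{q+1}[t]/\partial \widehat{\bm{a}}^p[t-1]=\bm{W}_b=\partial \bm{u}^{q+1}[t]/\partial \bm{s}^p[t-1]$ and prefactor $\widehat{\bm{a}}^p[t-1]$.

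For (ii) I would re-run the proof of Theorem~\ref{thm2}. Because the closed-form gradient in \eqref{grad-SR} remains valid for a feedforward connection (no implicit differentiation is needed when $q\ge p$), the weighted-firing-rate recursion acquires the extra linear term $\bm{W}_f\widehat{\bm{a}}^p[T]/\Lambda$ in the same position as in SAF. Under the convergence of $\bm{m}[t]$, the large-$T$ approximation, and the diagonal assumption $\partial \bm{s}^{q+1}[T]/\partial \bm{u}^{q+1}[T]=\operatorname{diag}(\bm{d}^{q+1}[T])$ \citep{Xiao2022OnlineNetworks}, the factor contributed by $\bm{W}_f$ matches $\bm{g}_{\widehat{\bm{U}}}^{q+1}[T]$ up to the scalar $1/V_{\rm th}$, with prefactor $\widehat{\bm{a}}^p[T]$; hence $\partial L_F/\partial \bm{W}_f=V_{\rm th}(\partial L_F/\partial \bm{W}_f)_{\rm SR}$, exactly as in Theorem~\ref{thm2}. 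No feedback counterpart is stated because \eqref{grad-SR} breaks down for feedback (it then requires implicit differentiation on the equilibrium state), which is precisely why (iii) is restricted to SAF-E and OTTT$_{\rm O}$.

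I expect the main obstacle to be the bookkeeping of the feedback case in (iii): one must verify that the online gradient semantics detaches the across-time input $\widehat{\bm{a}}^p[t-1]$ (resp.\ $\bm{s}^p[t-1]$), so that the current-step computation graph remains a feedforward DAG over layers even though $p>q{+}1$, the backward products still telescope down the stack from $N$ to $q{+}1$, and the correct presynaptic trace $\widehat{\bm{a}}^p[t-1]$ (rather than $\widehat{\bm{a}}^p[t]$) appears as the prefactor in both methods. Everything else is a direct transcription of the proofs of Theorems~\ref{thm1} and~\ref{thm2} with one additional linear term.
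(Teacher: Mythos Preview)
Your proposal is correct and follows essentially the same approach as the paper: reduce to Theorems~\ref{thm1} and~\ref{thm2} for all parameters except the new weight, and then handle $\bm{W}_f$ (resp.\ $\bm{W}_b$) by directly matching the single additional factor it contributes, using that both frameworks already produce the prefactor $\widehat{\bm{a}}^p[t]$ (resp.\ $\widehat{\bm{a}}^p[t-1]$) and the same postsynaptic error $\bm{g}_{\widehat{\bm{U}}}^{q+1}$. Your framing via ``identical elementary factors over identical computation graphs'' is slightly more explicit than the paper's, which simply asserts that the gradients for parameters other than $\bm{W}_f$ are unchanged from the no-connection case and then verifies the $\bm{W}_f$ case via \eqref{A.3-4}; the content is the same.
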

The proof is stated in Appendix~\ref{subsec:Proof of Corollary3}.

\subsubsection*{Proximity to Spike Representation}
Assume that the SNN have a feedback connection same as in \eqref{define-snn-B}.  
Then, the same assertion as Theorem~\ref{thm2} does not hold, that is, SAF-F is not equivalent to Spike Representation in general. However, we can show that the gradient descent directions in SAF-F and Spike Representation are close.
\begin{thm}\label{thm3}
Suppose that $\bm{m}[t]$ converges when $t\rightarrow\infty$. Then, for sufficiently large $T$, the backward processes of SAF-F and Spike Representation with a feedback connection are similar, that is, $\left\langle \dfrac{\partial L_F}{\partial \bm{\theta}}, \left(\dfrac{\partial L_F}{\partial \bm{\theta}}\right)_{\rm SR}\right\rangle>0$ for all parameters $\bm{\theta}$.
\end{thm}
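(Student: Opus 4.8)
The plan is to reuse the factorization behind Theorem~\ref{thm2} and Corollary~\ref{corollary3} and to isolate the \emph{only} source of discrepancy, namely the implicit-differentiation term introduced by the feedback edge $\bm{W}_b$. For every parameter block $\bm\theta$ the SAF-F gradient factors as a left factor built from $\widehat{\bm{a}}$ times a backward signal $\bm{g}_{\widehat{\bm{U}}}[T]$, and this backward signal is propagated purely through the feedforward chain $\prod_i \partial\widehat{\bm{a}}^{i+1}[T]/\partial\widehat{\bm{a}}^{i}[T]$ because SAF-F treats the feedback input $\widehat{\bm{a}}^p[T-1]$ as detached. By Corollary~\ref{corollary3}, the feedforward portion of the Spike-Representation gradient coincides, up to the scale $V_{\rm th}$, with exactly this expression, so for large $T$ the two gradients share the same left factor (e.g. $\widehat{\bm{a}}^l[T]$ for $\bm{W}^l$) and differ only in their backward signals. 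Since each gradient is a rank-one outer product, the Frobenius inner product for a block factorizes as $\|\widehat{\bm{a}}^l[T]\|^2 V_{\rm th}^{-1}\langle \bm{g}^{\mathrm{SAF}}, \bm{g}^{\mathrm{SR}}\rangle$ with a nonnegative weight; hence it suffices to prove that the SAF-F and Spike-Representation backward signals have positive inner product.

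Next I would make the difference between the two backward signals precise. At the equilibrium reached as $T\to\infty$, which exists because $\bm{m}[t]$ converges, the feedback connection turns the Spike-Representation backward pass into a fixed-point equation $\bm{g}^{\mathrm{SR}}=\bm{g}^{\mathrm{SAF}}+\bm{J}\,\bm{g}^{\mathrm{SR}}$, where $\bm{J}$ is the (transposed) loop Jacobian obtained by travelling once around the cycle $q{+}1\to\cdots\to p$ through the feedforward layers and back through $\bm{W}_b^{\top}$. Solving gives $\bm{g}^{\mathrm{SR}}=(\bm{I}-\bm{J})^{-1}\bm{g}^{\mathrm{SAF}}=\sum_{k\ge 0}\bm{J}^{k}\bm{g}^{\mathrm{SAF}}$, which displays SAF-F as precisely the $k=0$ (no-loop) truncation of Spike Representation and recovers Theorem~\ref{thm2} when $\bm{W}_b=\bm{0}$. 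The convergence hypothesis on $\bm{m}[t]$ forces the loop map to be a contraction, $\rho(\bm{J})<1$, so the Neumann series converges and the manipulation is legitimate.

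Finally I would establish positivity of the inner product, which for the full parameter vector is the quadratic form $\langle \bm{g}^{\mathrm{SAF}},(\bm{I}-\bm{J})^{-1}\bm{g}^{\mathrm{SAF}}\rangle=\|\bm{g}^{\mathrm{SAF}}\|^2+\sum_{k\ge 1}\langle \bm{g}^{\mathrm{SAF}},\bm{J}^{k}\bm{g}^{\mathrm{SAF}}\rangle$, and per parameter is its block-restricted analogue. The leading term is strictly positive, and the plan is to dominate the loop corrections by it: each power of $\bm{J}$ is a product of surrogate-gradient and clamp diagonals whose entries lie in $[0,1]$, of leaky factors $\lambda\le 1$, and of the fixed weight $\bm{W}_b$, so one bounds $\sum_{k\ge 1}\langle \bm{g}^{\mathrm{SAF}},\bm{J}^{k}\bm{g}^{\mathrm{SAF}}\rangle$ by $\tfrac{\|\bm{J}\|}{1-\|\bm{J}\|}\|\bm{g}^{\mathrm{SAF}}\|^2$ and closes the argument once $\|\bm{J}\|$ is controlled. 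I expect this last step to be the main obstacle: the convergence of $\bm{m}[t]$ directly yields only a spectral-radius bound, whereas per-block positivity needs the loop gain to be genuinely subunital in operator norm, so the crux is to convert the boundedness of $\sigma'\in[0,1]$ and $\lambda\le 1$ into an operator-norm estimate on the single-loop Jacobian, exactly as in the corresponding OTTT$_{\rm A}$-versus-Spike-Representation estimate of \citet{Xiao2022OnlineNetworks}.
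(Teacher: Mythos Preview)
Your structural reading is correct and matches the paper: the SAF-F gradient is exactly the ``Jacobian-free'' (one-step) truncation of the implicit Spike-Representation gradient, i.e.\ what one gets by replacing $(I-\bm{J})^{-1}$ by $I$ in the equilibrium backward pass. The paper, however, does \emph{not} attempt to prove the positive inner product via a Neumann-series bound. After establishing that $\dfrac{\partial L_F}{\partial \bm{\theta}} = V_{\rm th}\,\widetilde{\left(\dfrac{\partial L_F}{\partial \bm{\theta}}\right)}_{\rm SR}$, where the tilde denotes the phantom gradient with $(I-\bm{J})^{-1}$ set to identity (and using $\bm{a}^p[T]\approx\bm{a}^p[T{-}1]$ for the $\bm{W}_b$ block), it simply invokes the results of \citet{jfb2022} and \citet{OntrainingImplicitModels}, which already guarantee $\big\langle \widetilde{(\partial L_F/\partial\bm{\theta})}_{\rm SR},\,(\partial L_F/\partial\bm{\theta})_{\rm SR}\big\rangle>0$ for implicit models at equilibrium. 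The positivity then follows by linearity in one line. So the paper's route is a citation, not an estimate.

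Your alternative route has a genuine gap at the very place you flag it. The bound you propose,
\[
\Big|\sum_{k\ge 1}\langle \bm{g}^{\mathrm{SAF}},\bm{J}^{k}\bm{g}^{\mathrm{SAF}}\rangle\Big|\;\le\;\frac{\|\bm{J}\|}{1-\|\bm{J}\|}\,\|\bm{g}^{\mathrm{SAF}}\|^2,
\]
dominates the leading $\|\bm{g}^{\mathrm{SAF}}\|^2$ term only when $\|\bm{J}\|<\tfrac12$, not merely $\|\bm{J}\|<1$; and the convergence hypothesis on $\bm{m}[t]$ yields at best a spectral-radius contraction of the \emph{forward} loop map, which controls neither the operator norm of the \emph{backward} loop Jacobian nor the sharper $\tfrac12$ threshold. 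The boundedness of $\sigma'$ and $\lambda\le 1$ does not by itself cap $\|\bm{J}\|$, since $\bm{J}$ still carries the unconstrained weight factors $\bm{W}^i$ and $\bm{W}_b$. In short, the Neumann-series argument as stated does not close without an extra assumption; the paper sidesteps this entirely by appealing to the phantom-gradient theorems for deep equilibrium models.
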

See Appendix \ref{subsec:Proof of Theorem3} for the proof. 

\section{Experiments}
\label{sec:Experiments}
In Sec.~\ref{subsec_equivalence}, we theoretically proved that SAF-E and OTTT$_{\rm O}$ as well as SAF-F and spike representation are equivalent. In this section, we experimentally compare these methods. As complex and large datasets make it difficult to analyze the results, we trained SAF on the CIFAR-10 and the CIFAR-100 datasets~\citep{krizhevsky2009learning} and inferred with SNN composed of LIF neurons. This experiment was performed five times with different initial parameters, and all approximation was executed by the sigmoid-like SG for fair comparison. We used the same experimental setup as \citep{Xiao2022OnlineNetworks}, including the choice of SG. 
The code was written in PyTorch~\citep{paszke2019pytorch}, and the experiments were executed using one GPU, an NVIDIA Tesla V100, 32GB. We show the pseudo-code of SAF-E and SAF-F in Algorithm~\ref{alg:saf} to better understand our methods. The implementation details are in Appendix~\ref{sec:implementation}. The main objective here is to analyze whether there are any inconsistencies between theory and experiment rather than to achieve state-of-the-art performance.
\clearpage

\begin{algorithm}[t]
\caption{One iteration of SAF training.}
\label{alg:saf}
\begin{algorithmic}[1]
\REQUIRE Network parameters $\{ \bm{W}^l \}$, $\{ \bm{b}^{l+1} \}$; Time steps $T$; Number of layers $N$; Other hyperparameters; Input dataset
\ENSURE Trained network parameters $\{ \bm{W}^l \}$, $\{ \bm{b}^{l+1} \}$
\FOR{$t = 1, 2, \ldots, T$} 
    \STATE \% Forward
    \FOR{$l = 1, 2, \ldots, N$} 
        \STATE Update the (weighted) potential accumulation $\widehat{\bm{U}}^l[t]$ and spike accumulation $\widehat{\bm{a}}^l[t]$ using \eqref{SAF-Def}.
    \ENDFOR
    \STATE \% Backward
    \FOR{$l = N, N-1, \ldots, 1$}
        \IF{training option is SAF-E}
            \STATE Update parameters with $\partial L_E[t] / \partial \bm{W}^l = \widehat{\bm{a}}^l[t]\,\bm{g}_{\widehat{\bm{U}}}^{l+1}[t]$ based on the gradient-based optimizer.
        \ELSIF{training option is SAF-F \AND $t=T$}
            \STATE Update parameters with $\partial L_F / \partial \bm{W}^l = \widehat{\bm{a}}^l[T]\,\bm{g}_{\widehat{\bm{U}}}^{l+1}[T]$ based on the gradient-based optimizer.
        \ENDIF
    \ENDFOR
\ENDFOR
\end{algorithmic}
\end{algorithm}

\subsection{Analysis of SAF-E}\label{subsec:SAF-E}

We experimentally analyze the performance of SAF-E. First, we compare accuracy. Table~\ref{table/result_E} shows the accuracy when we set $T=6$~(the difference of gradients are shown in Appendix~\ref{Comparison_of_gradient}). As shown in this table, SAF-E and OTTT$_{\rm O}$ have almost the same accuracy. The values in parentheses in Table~\ref{table/result_E} show the change in accuracy due to inference by an SNN composed of LIF neurons. From this table, we can see that the accuracy change due to inference by SNN consisting of LIF neurons is almost negligible in the case of CIFAR-10. On the other hand, CIFAR-100 shows a minor difference in accuracy (compared to the difference in gradient). This could be attributed to the increased task complexity, resulting in a more intricate loss function and greater susceptibility to even minor numerical errors affecting final accuracy.

Figure~\ref{fig/each_time_nofeedback} shows the accuracy and loss curves during the training of CIFAR-10. This indicates that the progress during training are comparable. Therefore, we confirmed experimentally that SAF-E and OTTT$_{\rm O}$ are numerically close. Similar results were obtained when there was a feedforward or a feedback connection~(see Appendix~\ref{appendix_experiment}). 

Next, we compare the training costs. From Table \ref{table/result_E}, it can be seen that SAF-E takes less time to train and uses less memory during training than OTTT$_{\rm O}$. However, OTTT$_{\rm O}$ can be executed with constant memory usage even as time steps increase. Therefore, we compared the training time and memory usage at different time steps. Figures~\ref{fig/training cost_only_each_time} (A) and (B) show the training time and memory at different time steps. Note that the training time was measured in one batch. It can be seen that the memory usage of SAF-E does not increase even if the number of time steps increases, similar to OTTT$_{\rm O}$. Also, from Fig.~\ref{fig/training cost_only_each_time} (B), we can see that SAF-E uses less memory than OTTT$_{\rm O}$. This result stems from the fact that SAF does not need to maintain the previous membrane potential.

\begin{table}[p]
  \caption{Performance comparison of SAF-E and OTTT$_{\rm O}$ on CIFAR-10 and CIFAR-100. The values in parentheses were the changes in accuracy and total firing rate due to inference by the SNN composed of LIF neurons. Note that training times were measured in one minibatch, and training time and memory were not perturbed between trials.}
  \begin{adjustbox}{center}
  \begin{tabular}{lccccc}   
   \multicolumn{2}{l}{\bf{CIFAR-10}}\\
   \hline
  \bf{Method}&\bf{$T$}&\bf{Memory~[GB]}&\bf{Training Time~[sec]} &\bf{Firing rate~[\%]}&\bf{Accuracy~[\%]}  \\
    \hline
    OTTT$_{\rm O}$&6&1.656&0.666&15.14$\pm$0.17&93.44$\pm$0.15 \\
    SAF-E  &6&1.184&0.468&14.76$\pm$0.15 (1.048$\times 10^{-5}$) &93.54$\pm$0.17 (0.016)\\    
    \hline\\

  \multicolumn{2}{l}{\bf{CIFAR-100}}\\
  \hline
  \bf{Method}&\bf{$T$}&\bf{Memory~[GB]}&\bf{Training Time~[sec]}  &\bf{Firing rate~[\%]}&\bf{Accuracy~[\%]}  \\
  \hline
    OTTT$_{\rm O}$&6&1.656&0.666&17.27$\pm$0.19&70.70$\pm$0.19 \\
    SAF-E  &6&1.186&0.464&16.77$\pm$0.08 (1.513$\times10^{-5}$) &71.56$\pm$0.35 (0.042)\\    
    \hline  \end{tabular}\label{table/result_E}
\end{adjustbox}
\end{table}

\begin{figure}[p]
\centering
  \begin{minipage}[b]{0.45\linewidth}
    \centering
\includegraphics[keepaspectratio, scale=0.36]{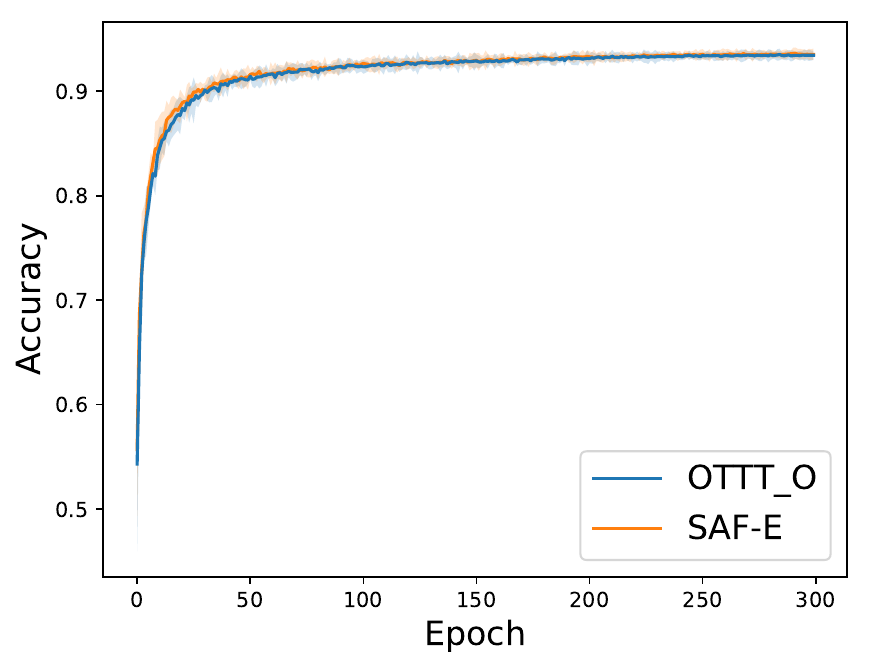} 

(A) Accuracy \label{acc_each_time}
  \end{minipage}
  \begin{minipage}[b]{0.45\linewidth}
    \centering
\includegraphics[keepaspectratio, scale=0.36]{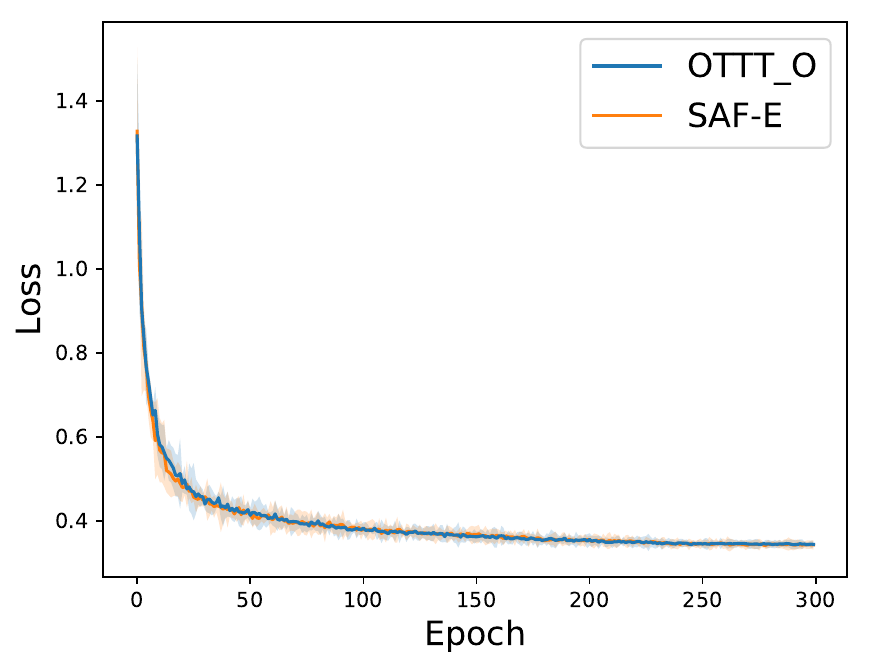}

(B) Loss
\label{loss_each_time}
  \end{minipage}
  \caption{Accuracy and loss curves of SAF-E and OTTT$_{\rm O}$ on CIFAR-10 ($T=6$).}\label{fig/each_time_nofeedback}
\end{figure}

\begin{figure}[p]
\begin{minipage}[t]
{0.33\linewidth}
    \centering
\includegraphics[keepaspectratio, scale=0.36]{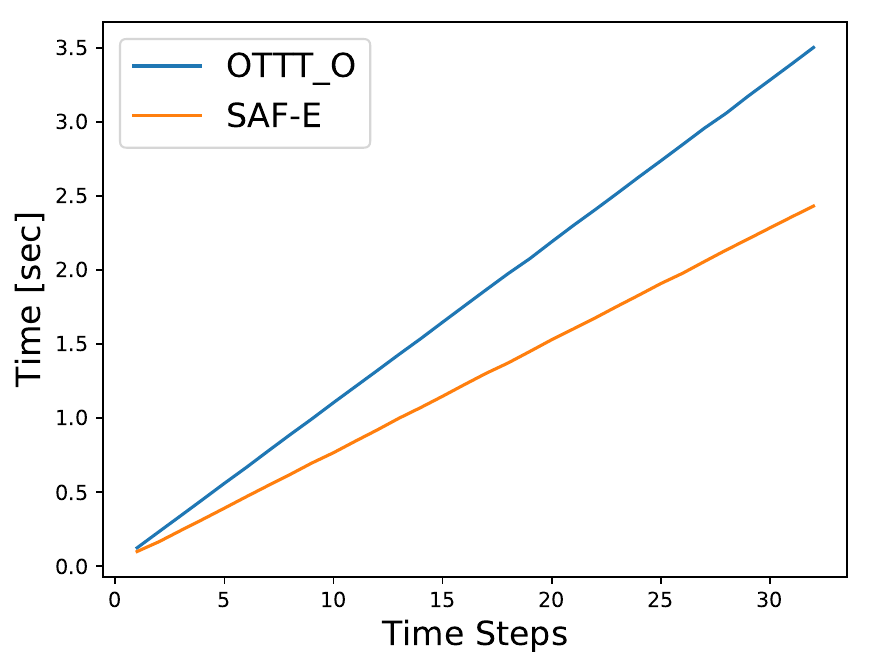}
\subcaption{(A) Training Time}
\label{fig/time_only_each_time}
\end{minipage}
\begin{minipage}[t]
{0.33\linewidth}
    \centering
\includegraphics[keepaspectratio, scale=0.36]{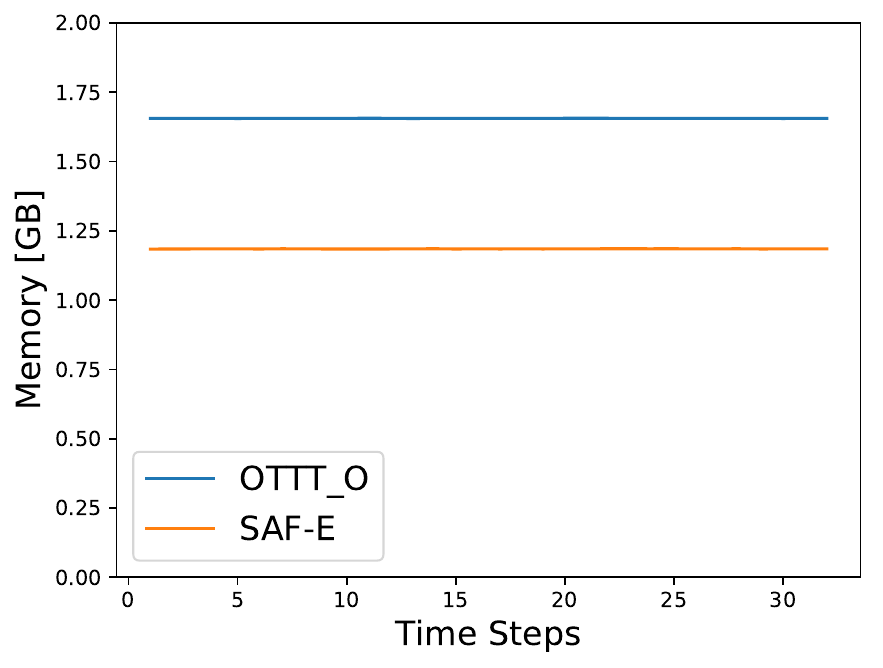}
\subcaption{(B) Memory Consumption}
\label{fig/memory_only_each_time}
  \end{minipage}
    \begin{minipage}[t]{0.33\linewidth}
    \centering
\includegraphics[keepaspectratio, scale=0.36]{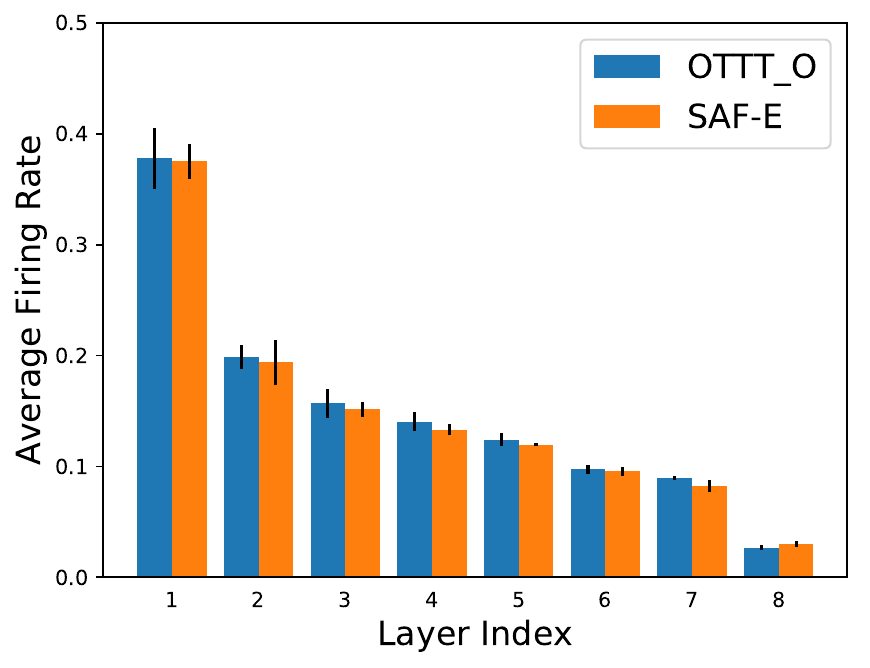}
\subcaption{(C) Firing rate}
\label{fig/firing_rate_only_each_time}
  \end{minipage}
  \caption{Training time, memory consumption, and firing rate of each layer of SAF-E and OTTT$_{\rm O}$ on CIFAR-10. 
  }\label{fig/training cost_only_each_time}
\end{figure}

Finally, we compare the firing rate. As shown in Table \ref{table/result_E}, the total firing rates of SAF-E and OTTT$_{\rm O}$ are close. Also, the amount of change due to inference with SNNs consisting of LIF neurons is also almost negligible, similar to the case for accuracy. Furthermore, from Fig.~\ref{fig/training cost_only_each_time} (C), the firing rates of each layer are almost close too.

These results indicate that using SAF-E can reduce the training time and memory compared to OTTT$_{\rm O}$ while achieving close firing rate and accuracy. It was also shown that using the parameters trained with SAF-E to infer with the SNN consisting of LIF neurons is feasible.

\subsection{Analysis of SAF-F}
\label{subsec:SAF-F} 
In this subsection, we experimentally analyze the performance of SAF-F. First, we compare accuracy. 
The results are shown in Table \ref{table/result_F} and Fig.~\ref{fig/full_time_nofeedback}~(the difference of gradients are shown in Appendix~\ref{Comparison_of_gradient}). Since the spike representation method is effective when $T$ is large, there is no theoretical guarantee that SAF-F can infer well in a short time step. However, Table~\ref{table/result_F} shows that the accuracy of SAF-F is almost the same for $T=6$ and $T=32$.
Note that Spike Representation methods, except for SAF-F, do not use SGs, which makes precise comparisons difficult. Therefore, we only compared SAF-F with OTTT$_{\rm A}$.

As with the previous results, the accuracy change due to inference with SNNs consisting of LIF neurons is almost negligible. Meanwhile, the accuracies of SAF-F and OTTT$_{\rm A}$ are close, though from the perspective of standard deviation, there seems to be a difference. From Sec.~\ref{subsec_equivalence}, the gradient directions of Spike Representation and SAF-F are identical, but those of Spike Representation and OTTT$_{\rm A}$ are only similar. Therefore, the gradient directions of SAF-F and OTTT$_{\rm A}$ are also only similar. This is thought to be the cause of the differences in accuracy and loss. 

Next, we compare the training costs. From Table~\ref{table/result_F}, it can be seen that SAF-F requires less time for training and uses less memory than OTTT$_{\rm A}$. This trend is also similar when the time step is varied~(see Figs.~\ref{fig/training cost_only_full} (A) and (B)). 

Finally, we compare firing rate. As shown in Table~\ref{table/result_F} as for accuracy, the change of the total firing rate by inferring with SNNs consisting of LIF neurons is almost negligible. Meanwhile, the total firing rate of SAF-F is smaller than of OTTT$_{\rm A}$. In addition, from Fig.~\ref{fig/training cost_only_full} (C), it can be seen that the firing rate of each layer~(especially the first layer) is smaller in SAF-F than OTTT$_{\rm A}$. These differences also indicate that SAF-F and OTTT$_{\rm A}$ are generally not identical. 

From the above analysis, we can say that SAF-F is a better choice than OTTT$_{\rm A}$ in terms of training time, memory usage, and firing rate. Also, as with SAF-E, inference can be performed by the standard SNN using the parameters trained by SAF-F.
\section{Limitation and Discussion}
In this paper, we have shown that SAF-E coincides with OTTT$_{\rm O}$ and that SAF-F coincides with OTTT$_{\rm A}$. On the other hand, \citet{Xiao2022OnlineNetworks} shows that OTTT methods are more accurate than BPTT. Given the concordance between SAF and OTTT, SAF is more accurate than BPTT. We consider that the better accuracy than BPTT is due to the difficulty of achieving optimal rollout at all times in BPTT, as known by vanishing gradients, and the fact that most of the datasets used in the field of deep SNNs are time-independent labels. Therefore, we assume that $\partial \widehat{{\bm a}}^N[t-1]/\partial {\bf W}^l$ is zero~(see Appendix~\ref{subsec:6_7}). This assumption is intended to even out the gradient's effect on parameter updates at each time, as described in Appendix~\ref{subsec:6_7}, and is also valid for widely used time-independent datasets. On the other hand, since SAF trains using information up to $t$ by the spike accumulation $\widehat{{\bm a}}$, not just the current time, SAF can implicitly train at each time while using information up to the previous time. The above assumption can be easily removed; however, its validation would require a labeled time-dependent data set. This would require the preparation of an appropriate data set, which is outside the scope of this paper, considering theoretical consistency with OTTT and is therefore considered one for future research.

\begin{table}[ht]
  \caption{Performance comparison of SAF-F and OTTT$_{\rm A}$ on CIFAR-10 and CIFAR-100.  The values in parentheses are the changes in accuracy and total firing rate due to inference by the SNN composed of LIF neurons. Note that training times were measured in one minibatch, and training time and memory were not perturbed between trials.}
  \begin{adjustbox}{center}
  \begin{tabular}{lccccc}
  \multicolumn{2}{l}{\bf{CIFAR-10}}\\
  \hline
  \bf{Method}&\bf{$T$}&\bf{Memory~[GB]}&\bf{Training Time~[sec]}  &\bf{Firing rate~[\%]}&\bf{Accuracy~[\%]}  \\
    \hline
    OTTT$_{\rm A}$&6&1.656&0.661&15.51$\pm$0.10&93.39$\pm$0.16 \\
    OTTT$_{\rm A}$&32&1.656&3.474&13.96$\pm$0.20&93.62$\pm$0.04 \\
    SAF-F  &6&1.157&0.247&10.50$\pm$0.19 (3.306$\times 10^{-5}$)&93.09$\pm$0.15 (0.076)\\ 
    SAF-F  &32&1.157&1.077&10.65$\pm$0.12 (0.965$\times10^{-5}$) &93.25$\pm$0.07 (0.002)\\    
    \hline\\
  \multicolumn{2}{l}{\bf{CIFAR-100}}\\
  \hline
  \bf{Method}&\bf{$T$}&\bf{Memory~[GB]}&\bf{Training Time~[sec]}  &\bf{Firing rate~[\%]}&\bf{Accuracy~[\%]}  \\
  \hline
    OTTT$_{\rm A}$&6&1.656&0.666&18.26$\pm$0.21&70.18$\pm$0.26 \\
    OTTT$_{\rm A}$&32&1.656&3.479&16.62$\pm$0.20&70.77$\pm$0.16 \\
    SAF-F  &6&1.157&0.249&12.19$\pm$0.22 (1.106$\times10^{-5}$) &70.58$\pm$0.34 (0.002)\\    
    SAF-F  &32&1.157&1.077&12.21$\pm$0.15 (1.016$\times10^{-5}$) &71.73$\pm$0.04 (0.096)\\    
    \hline  \end{tabular}\label{table/result_F}
\end{adjustbox}
\end{table}

\begin{figure}[t]
\centering
  \begin{minipage}[b]{0.45\linewidth}
    \centering
\includegraphics[keepaspectratio, scale=0.36]{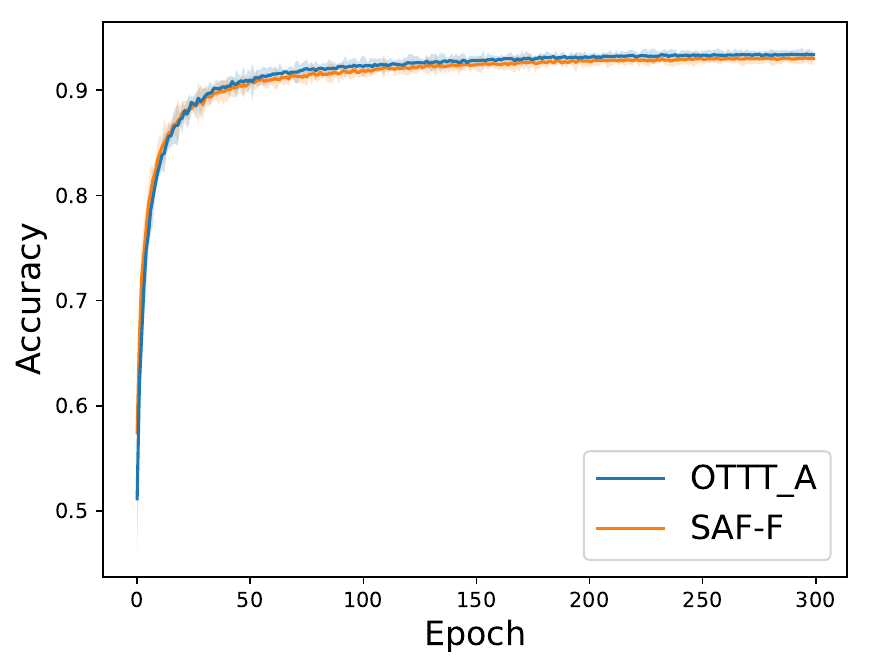} 

(A) Accuracy \label{acc_full}
  \end{minipage}
  \begin{minipage}[b]{0.45\linewidth}
    \centering
\includegraphics[keepaspectratio, scale=0.36]{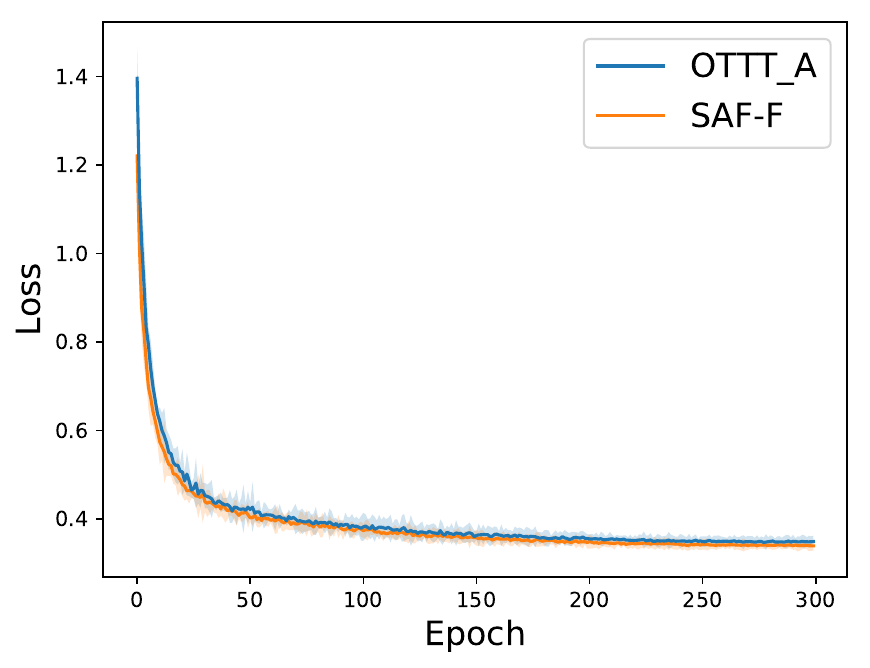}

(B) Loss
\label{loss_full}
  \end{minipage}
  \caption{Accuracy and loss curves of SAF-F and OTTT$_{\rm A}$ on CIFAR-10 ($T=6)$.}
  \label{fig/full_time_nofeedback}
\end{figure}

\begin{figure}[ht]
\begin{minipage}[t]
{0.33\linewidth}
    \centering
\includegraphics[keepaspectratio, scale=0.36]{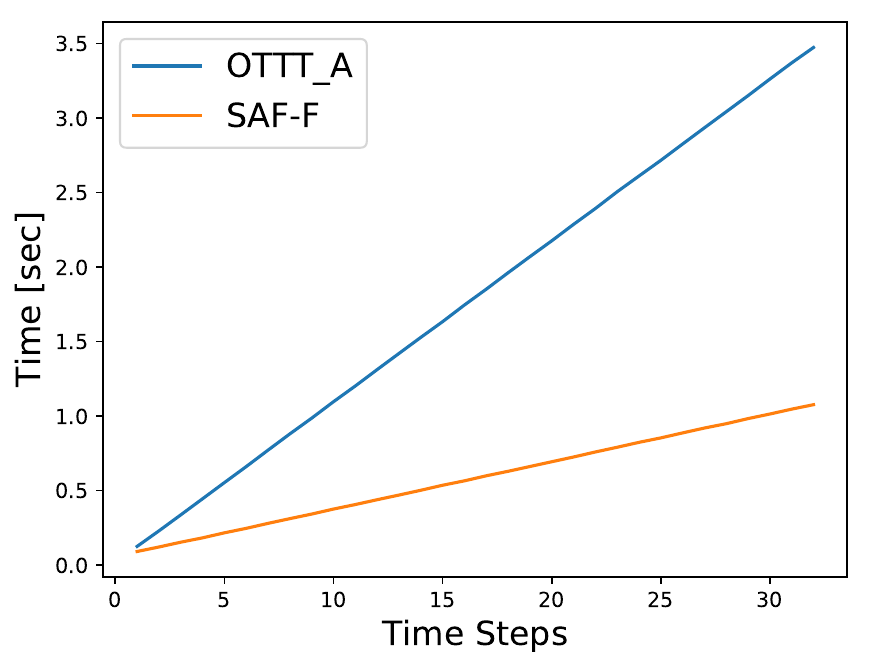}
\subcaption{(A) Training Time}
\label{fig/time_only_full}
\end{minipage}
\begin{minipage}[t]
{0.33\linewidth}
    \centering
\includegraphics[keepaspectratio, scale=0.36]{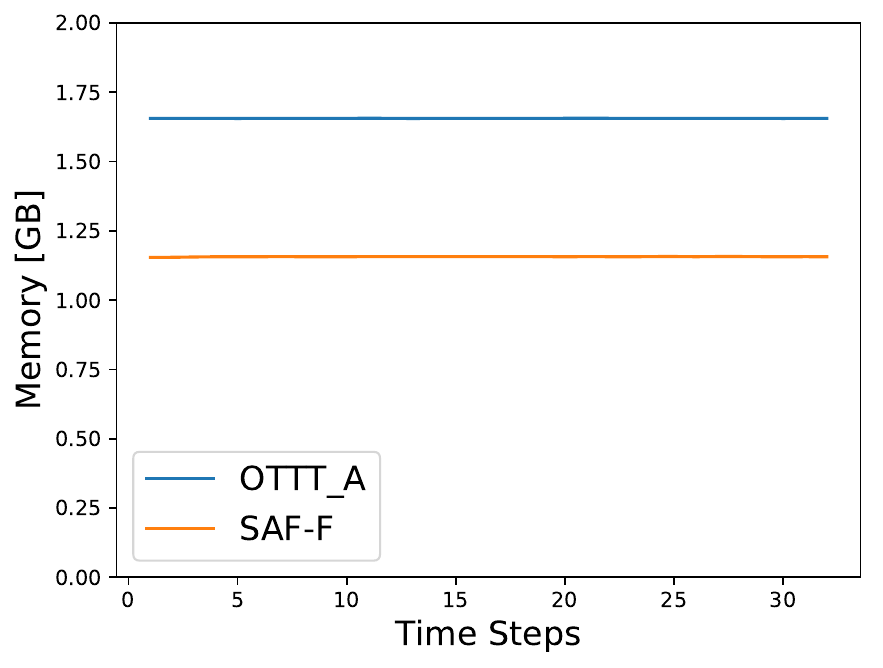}
\subcaption{(B) Memory Consumption}
\label{fig/memory_only_full}
  \end{minipage}
    \begin{minipage}[t]{0.33\linewidth}
    \centering
\includegraphics[keepaspectratio, scale=0.36]{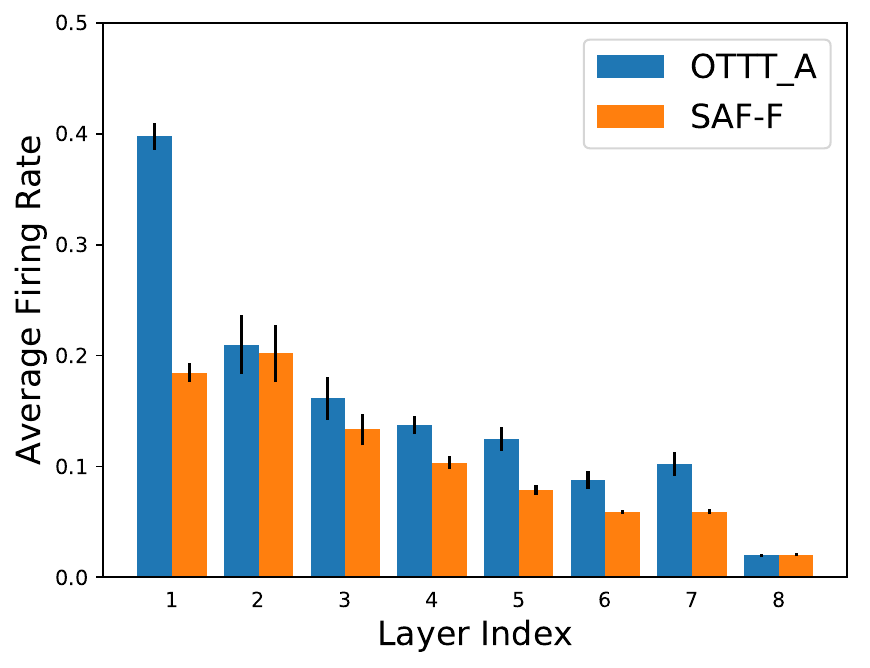}
\subcaption{(C) Firing rate}
\label{fig/firing_rate_only_full}
  \end{minipage}
  \caption{Training time, memory consumption, and firing rate of each layer of SAF-F and OTTT$_{\rm A}$ on CIFAR-10. 
  It can be seen that the firing rates do not match, which agree with the theoretical result that SAF-F is not always consistent with OTTT$_{\rm A}$.
  }\label{fig/training cost_only_full}
\end{figure}

\clearpage

\section{Conclusion and Future Work}
This article proposed SAF. SAF is a training method of SNNs that propagates the spike accumulation during training; however, SAF propagates the spike trains during inference, as do other SNNs. This article showed that SAF trained at each time step (SAF-E) is equivalent to OTTT$_{\rm O}$, and SAF trained at the final time step (SAF-F) is also equivalent to Spike Representation. We conducted experiments on the CIFAR-10 dataset and confirmed that the experimental results are consistent with these assertions and that training time and memory of SAF are reduced compared to OTTT. 

Most SNNs, including OTTT, Spike Representation, and SAF, train under time-independent labeled training data. Then, the issue of improving SAF to train even if the labels change at each time step remains for future work. Other remaining tasks are to experimentally confirm the similar results of this work for other surrogate gradients and to investigate the theoretical relationship with other learning rules such as SuperSpike.

SAF, as presented in this article, assumes training on a GPU. Therefore, it may not be suitable for training on neuromorphic chips. However, we believe executing training on GPUs and inference on neuromorphic chips is reasonable. In the future, we also plan to verify inference on the neuromorphic chip, including the extent to which numerical errors are affected by differences in computing environments.

\bibliographystyle{plainnat}
\bibliography{references_saf-ottt}

\newpage

\appendix

\part*{Appendix}
\section{List of main Formulas}\label{list_of_main_formulas}
The neurons and gradients are as follows.
Note that $\bm{s}^l[t]$, $\bm{u}^l[t]$, $\bm{W}^l$ and $\bm{b}^l$ are the spike train, membrane potential, weight, and bias of $l$-th~(also denoted by $p$-th and $q$-th) layer, respectively.
Also, $\lambda \le 1$ is the leaky term, $V_{\rm th}$ is the threshold, $L$, $L_E$, and $L_F$ are the loss functions, $H$ is the element-wise Heaviside step function, $N$ is the number of layers, $\bm{a}[t] = \sum_{\tau = 0}^{t}\lambda^{t-\tau}\bm{s}[\tau] / \sum_{\tau = 0}^{t}\lambda^{t-\tau}$ is the weighted firing rate, $\widehat{\bm{a}}[t] = \sum_{\tau = 0}^{t}\lambda^{t-\tau}\bm{s}[\tau]$ is the (weighted) spike accumulation, and $ \widehat{\bm{U}}^{l+1}[t] = \lambda\widehat{\bm{U}}^{l+1}[t-1] + \bm{W}^l(\widehat{\bm{a}}^l[t]-\lambda\widehat{\bm{a}}^l[t-1]) + \bm{b}^{l+1}$ is the (weighted) potential accumulation.

\subsection{Neurons}
LIF neuron \eqref{define-snn}:
\begin{align*}
\begin{cases}
\bm{u}^{l+1}[t] = \lambda(\bm{u}^{l+1}[t-1] - V_{\rm th} \, \bm{s}^{l+1}[t-1]) + \bm{W}^l \bm{s}^{l}[t] + \bm{b}^{l+1},\\
\bm{s}^{l+1}[t] = H(\bm{u}^{l+1}[t] - V_{\rm th}).
\end{cases}
\end{align*}

SAF neuron \eqref{SAF-Def}:
\begin{align*}
\begin{cases}
\widehat{\bm{U}}^{l+1}[t] = \bm{W}^l \widehat{\bm{a}}^l[t] + \bm{b}^{l+1}\sum_{\tau = 0}^{t-1}\lambda^{t-\tau} + \lambda^{t}\widehat{\bm{U}}^{l+1}[0],\vspace{2pt}\\
\widehat{\bm{a}}^{l+1}[t] = \lambda\widehat{\bm{a}}^{l+1}[t-1] + H(\widehat{\bm{U}}^{l+1}[t] -  V_{\rm th}(\lambda\widehat{\bm{a}}^{l+1}[t-1] +1)).
\end{cases}
\end{align*}

\subsection{Neurons with feedforward connection}
LIF neuron \eqref{define-snn-F}:
\begin{align*}
\begin{cases}
\bm{u}^{q+1}[t] = \lambda(\bm{u}^{q+1}[t-1] - V_{\rm th}\,\bm{s}^{q+1}[t-1]) + \bm{W}^q \bm{s}^{q}[t] + \bm{b}^{q+1}+\bm{W}_f\bm{s}^p[t],\\
\bm{s}^{q+1}[t] = H(\bm{u}^{q+1}[t]- V_{\rm th}).
\end{cases}
\end{align*}

SAF neuron \eqref{SAF-Def-F}:
\begin{align*}
\begin{cases}
\widehat{\bm{U}}^{q+1}[t]= \bm{W}^q \widehat{\bm{a}}^q[t] +\bm{b}^{q+1}(\sum_{\tau=0}^{t-1}\lambda^{\tau}) + \lambda^{t}\widehat{\bm{U}}^{q+1}[0]+\bm{W}_f \, \widehat{\bm{a}}^p[t],\\
\widehat{\bm{a}}^{q+1}[t] = \lambda\widehat{\bm{a}}^{q+1}[t-1]+ H(\widehat{\bm{U}}^{q+1} -  V_{\rm th}(\lambda\widehat{\bm{a}}^{q+1}[t-1]+1)).
\end{cases}
\end{align*}

\subsection{Neurons with feedback connection}
LIF neuron \eqref{define-snn-B}:
\begin{align*}
\begin{cases}
\bm{u}^{q+1}[t] = \lambda(\bm{u}^{q+1}[t-1] - V_{\rm th} \,\bm{s}^{q+1}[t-1]) + \bm{W}^q \bm{s}^{q}[t] + \bm{b}^{q+1}+\bm{W}_b \,\bm{s}^p[t-1],\\
\bm{s}^{q+1}[t] = H(\bm{u}^{q+1}[t]- V_{\rm th}).
\end{cases}
\end{align*}

SAF neuron \eqref{SAF-Def-B}:
\begin{align*}
\begin{cases}
\widehat{\bm{U}}^{q+1}[t] 
= \bm{W}^q \widehat{\bm{a}}^q[t] +\bm{b}^{q+1}(\sum_{\tau=0}^{t-1}\lambda^{\tau}) + \lambda^{t}\widehat{\bm{U}}^{q+1}[0]+\bm{W}_b \,\widehat{\bm{a}}^p[t-1],\vspace{2pt}\\
\widehat{\bm{a}}^{q+1}[t] = \lambda\widehat{\bm{a}}^{q+1}[t-1]+ H(\widehat{\bm{U}}^{q+1} -  V_{\rm th}(\lambda\widehat{\bm{a}}^{q+1}[t-1]+1)).
\end{cases}
\end{align*}

\subsection{Gradients}

Spike Representation \eqref{grad-SR}:
\begin{equation*}
\left(\frac{\pd L}{\pd \bm{W}^l}\right)_{\rm SR} = \frac{\pd L}{\pd \bm{a}^{N}[T]}\left(\prod_{i=N-1}^{l+1}\frac{\pd \bm{a}^{i+1}[T]}{\partial \bm{a}^{i}[T]}\right)\frac{\pd\bm{a}^{l+1}[T]}{\pd\bm{W}^l}.
\end{equation*}

OTTT$_{\rm O}$ \eqref{grad-OTTT}:
\begin{equation*}
\left(\frac{\pd L[t]}{\pd \bm{W}^l} \right)_{\rm OT}= \widehat{\bm{a}}^l[t]\, \frac{\pd L[t]}{\pd \bm{s}^{N}[t]}\left(\prod_{i=N-1}^{l+1}\frac{\pd \bm{s}^{i+1}[t]}{\partial \bm{s}^{i}[t]}\right)\frac{\pd \bm{s}^{l+1}[t]}{\pd \bm{u}^{l+1}[t]}.
\end{equation*}

OTTT$_{\rm A}$:
\begin{equation*}
\left(\frac{\pd L}{\pd \bm{W}^l} \right)_{\rm OT}= \sum_{t}^{T}\widehat{\bm{a}}^l[t]\, \frac{\pd L[t]}{\pd \bm{s}^{N}[t]}\left(\prod_{i=N-1}^{l+1}\frac{\pd \bm{s}^{i+1}[t]}{\partial \bm{s}^{i}[t]}\right)\frac{\pd \bm{s}^{l+1}[t]}{\pd \bm{u}^{l+1}[t]}.
\end{equation*}

SAF-E \eqref{grad-SAF-o2}, or \eqref{gr-SAF-E}:
\begin{align*}
\frac{\partial L_E[t]}{\partial \bm{W}^l}&=\widehat{\bm{a}}^l[t]\,\frac{\partial L_E[t]}{\partial \widehat{\bm{a}}^N[t]}\left(\prod_{i=N-1}^{l+1}\frac{\partial \widehat{\bm{a}}^{i+1}[t]}{\partial \widehat{\bm{a}}^{i}[t]}\right)\frac{\partial \widehat{\bm{a}}^{l+1}[t]}{\partial \widehat{\bm{U}}^{l+1}[t]}. 
\end{align*}

SAF-F \eqref{gr-SAF-F}:
\begin{align*}
\frac{\partial L_F}{\partial \bm{W}^l}&=\widehat{\bm{a}}^l[T]\,\frac{\partial L_F}{\partial \widehat{\bm{a}}^N[T]}\left(\prod_{i=N-1}^{l+1}\frac{\partial \widehat{\bm{a}}^{i+1}[T]}{\partial \widehat{\bm{a}}^{i}[T]}\right)\frac{\partial \widehat{\bm{a}}^{l+1}[T]}{\partial \widehat{\bm{U}}^{l+1}[T]}. 
\end{align*}

\section{Derivation and Proofs}
\subsection{Derivation of (\ref{SAF-Def}) and (\ref{eq:SAF-LIF})} \label{subsection_derivation_4_5}

In this subsection, we derive \eqref{SAF-Def} and \eqref{eq:SAF-LIF} through proving that \eqref{SAF-Def} and \eqref{eq:SAF-LIF} hold for any $t \in \{1,\ldots,T\}$ with mathematical induction.

First, we prove that \eqref{SAF-Def} and \eqref{eq:SAF-LIF} hold for $t =1$.
We compute $\widehat{\bm{U}}^{l+1}[1]$ based on definition as follows:
\begin{align*}
\widehat{\bm{U}}^{l+1}[1] &=  \lambda\widehat{\bm{U}}^{l+1}[0] + \bm{W}^l(\widehat{\bm{a}}^l[1]-\lambda\widehat{\bm{a}}^l[0]) + \bm{b^{l+1}}\\
&=\bm{W}^l\widehat{\bm{a}}^l[1] + \bm{b}^{l+1} + \lambda\widehat{\bm{U}}^{l+1}[0].
\end{align*}
Taking account into $\widehat{\bm{a}}^{l+1}[0] = \widehat{\bm{s}}^{l+1}[0] = 0$, we obtain
\begin{align*}
\bm{u}^{l+1}[1]  &= \lambda(\bm{u}^{l+1}[0] - V_{\rm th}\,\bm{s}^{l+1}[0]) + \bm{W}^l\bm{s}^l[1] + \bm{b^{l+1}}\\
&=\widehat{\bm{U}}^{l+1}[0] - V_{\rm th}\,\widehat{\bm{a}}^{l+1}[0]) + \bm{W}^l(\widehat{\bm{a}}^l[1]-\lambda\widehat{\bm{a}}^l[0]) + \bm{b^{l+1}}\\
&= \widehat{\bm{U}}^{l+1}[1] -  V_{\rm th}\,\widehat{\bm{a}}^{l+1}[0].
\end{align*}
With this equation, we have
\begin{align*}
\bm{s}^{l+1}[1] &= H(\bm{u}^{l+1}[1] - V_{\rm th})\\
&= H(\widehat{\bm{U}}^{l+1}[1] -  V_{\rm th}(\lambda\widehat{\bm{a}}^{l+1}[0]+1)).
\end{align*}
Then, $\widehat{\bm{a}}^{l+1}[1]$ can be computed as follows:
\begin{align*}
\widehat{\bm{a}}^{l+1}[1] &= \sum_{\tau = 0}^{1}\lambda^{1-\tau}\bm{s}^{l+1}[\tau]\\
&= \lambda \widehat{\bm{a}}^{l+1}[0] + \bm{s}^{l+1}[1]\\
&= \lambda \widehat{\bm{a}}^{l+1}[0] + H(\widehat{\bm{U}}^{l+1}[1] -  V_{\rm th}(\lambda\widehat{\bm{a}}^{l+1}[0]+1)).
\end{align*}
Therefore, \eqref{SAF-Def} and \eqref{eq:SAF-LIF} hold for $t = 1$.

Next, we prove that \eqref{SAF-Def} and \eqref{eq:SAF-LIF} hold for $t$ when they hold for any $\tau \in \{1,\ldots,t-1\}$. Assuming that \eqref{SAF-Def} and \eqref{eq:SAF-LIF} hold for any $\tau \in \{1,\ldots,t-1\}$, we have
\begin{align*}
\widehat{\bm{U}}^{l+1}[t] &=  \lambda\widehat{\bm{U}}^{l+1}[t-1] + \bm{W}^l(\widehat{\bm{a}}^l[t]-\lambda\widehat{\bm{a}}^l[t-1]) + \bm{b^{l+1}}\\
&= \lambda\left(\bm{W}^l \widehat{\bm{a}}^l[t-1] + \bm{b}^{l+1}\sum_{\tau = 0}^{t-2}\lambda^{t-\tau} + \lambda^{t-1}\widehat{\bm{U}}^{l+1}[0]\right) + \bm{W}^l(\widehat{\bm{a}}^l[t]-\lambda\widehat{\bm{a}}^l[t-1]) + \bm{b^{l+1}}\\
&= \bm{W}^l \widehat{\bm{a}}^l[t] + \bm{b}^{l+1}\sum_{\tau = 0}^{t-1}\lambda^{t-\tau} + \lambda^{t}\widehat{\bm{U}}^{l+1}[0].
\end{align*}
Also, $\bm{u}^{l+1}[t]$ is computed as follows:
\begin{align*}
\bm{u}^{l+1}[t] &= \lambda(\bm{u}^{l+1}[t-1] - V_{\rm th}\,\bm{s}^{l+1}[t-1]) + \bm{W}^l\bm{s}^l[t] + \bm{b^{l+1}}\\
&= \lambda(\widehat{\bm{U}}^{l+1}[t-1] - V_{\rm th}(\lambda\widehat{\bm{a}}^{l+1}[t-2] +\bm{s}^{l+1}[t-1]) )+ \bm{W}^l(\widehat{\bm{a}}^l[t]-\lambda\widehat{\bm{a}}^l[t-1]) + \bm{b^{l+1}}\\
&= \widehat{\bm{U}}^{l+1}[t] - V_{\rm th}\lambda\widehat{\bm{a}}^{l+1}[t-1].
\end{align*}
With this equation, we have
\begin{align*}
\bm{s}^{l+1}[t] &=  H(\bm{u}^{l+1}[t] - V_{\rm th})\\
&=  H(\widehat{\bm{U}}^{l+1}[t] -  V_{\rm th}(\lambda\widehat{\bm{a}}^{l+1}[t-1]+1)).
\end{align*}
Then, $\widehat{\bm{a}}^{l+1}[t]$ can be computed as follows:
\begin{align*}
\widehat{\bm{a}}^{l+1}[t] &= \sum_{\tau = 0}^{t}\lambda^{t-\tau}\bm{s}^{l+1}[\tau]\\
&= \lambda \widehat{\bm{a}}^{l+1}[t-1] + \bm{s}^{l+1}[t]\\
&= \lambda \widehat{\bm{a}}^{l+1}[t-1] + H(\widehat{\bm{U}}^{l+1}[t] -  V_{\rm th}(\lambda\widehat{\bm{a}}^{l+1}[t-1]+1)).
\end{align*}
Hence, \eqref{SAF-Def} and \eqref{eq:SAF-LIF} hold for $t$ when they hold for any $\tau \in \{1,\ldots,t-1\}$.

Therefore, \eqref{SAF-Def} and \eqref{eq:SAF-LIF} hold any $t \in \{1,\ldots,T\}$.

\subsection{Derivation of (\ref{gr-SAF-E}) and (\ref{gr-SAF-F})}\label{subsec:6_7}
First, since $L_E[t]=\mathcal{L}(\bm{s}^N[t],\bm{y})/T$, we have
\begin{align}
\frac{\partial L_E[t]}{\partial \bm{W}^l}&=\frac{\partial L_E[t]}{\partial \bm{s}^N[t]}\frac{\partial \bm{s}^N[t]}{\partial \bm{W}^l}.
\end{align}
Assuming that $L_E[t]$ depends only on $\widehat{\bm{a}}^l[t]$ and $\widehat{\bm{U}}^l[t]$, i.e., not on anything up to $t-1$, we regard $\widehat{\bm{a}}^N[t]$  as $\bm{s}^N[t]+\rm{Const}$. Then, we calculate that
\begin{align}
\frac{\partial L_E[t]}{\partial \bm{W}^l}&=\frac{\partial L_E[t]}{\partial \bm{s}^N[t]}\frac{\partial \bm{s}^N[t]}{\partial \widehat{\bm{a}}^N[t]}\frac{\partial \widehat{\bm{a}}^N[t]}{\partial \bm{W}^l}=\frac{\partial L_E[t]}{\partial \widehat{\bm{a}}^N[t]}\frac{\partial \widehat{\bm{a}}^N[t]}{\partial \bm{W}^l}.\label{A.2-1}
\end{align}
Next, it follows from \eqref{SAF-Def} that  
\begin{align}
\frac{\partial \widehat{\bm{a}}^N[t]}{\partial \bm{W}^l}&=\frac{\partial \widehat{\bm{a}}^N[t]}{\partial \widehat{\bm{a}}^N[t-1]}\frac{\partial \widehat{\bm{a}}^N[t-1]}{\partial \bm{W}^l}+\frac{\partial \widehat{\bm{a}}^N[t]}{\partial \widehat{\bm{U}}^N[t]}\frac{\partial \widehat{\bm{U}}^N[t]}{\partial \bm{W}^l}\label{A.2-2}\\ 
&=\frac{\partial \widehat{\bm{a}}^N[t]}{\partial \widehat{\bm{U}}^N[t]}\frac{\partial \widehat{\bm{U}}^N[t]}{\partial \widehat{\bm{a}}^{N-1}[t]}\frac{\partial \widehat{\bm{a}}^{N-1}[t]}{\partial \bm{W}^l}\\
&=\frac{\partial \widehat{\bm{a}}^N[t]}{\partial \widehat{\bm{a}}^{N-1}[t]}\frac{\partial \widehat{\bm{a}}^{N-1}[t]}{\partial \bm{W}^l},\label{A.2-3}
\end{align}
where we regard $\partial \widehat{\bm{a}}^N[t-1]/\partial \bm{W}^l$ as $0$, and $\widehat{\bm{U}}^N[t]$ as a function of $\widehat{\bm{a}}^{N-1}[t]$. It should be noted that the assumption $\partial \widehat{\bm{a}}[t-1]/\partial \bm{W}=0$ implies equalizing the effect of gradient on parameter updates at each time. Indeed, $\partial \widehat{\bm{a}}[t-1] / \partial \bm{W}$ is already used for updating parameters at time $t-1$. Therefore, using also $\partial \widehat{\bm{a}}[t-1] / \partial \bm{W}$ in the calculation of $\partial \widehat{\bm{a}}[t] / \partial \bm{W}$ will result in excessive influence of the gradient at $t-1$ on the parameter update at $t$.

By repeating the process from \eqref{A.2-2} to \eqref{A.2-3} in the same way, we derive
\begin{align}
    \frac{\partial \widehat{\bm{a}}^N[t]}{\partial \bm{W}^l}
=\left(\prod_{i=N-1}^{l+1}\frac{\partial \widehat{\bm{a}}^{i+1}[t]}{\partial \widehat{\bm{a}}^i[t]}\right)\frac{\partial \widehat{\bm{a}}^{l+1}[t]}{\partial \bm{W}^l}.
\end{align}
Therefore, the gradient of SAF-E is calculated as follows:
\begin{align}
\frac{\partial L_E[t]}{\partial \bm{W}^l}&=\frac{\partial L_E[t]}{\partial \widehat{\bm{a}}^N[t]}\left(\prod_{i=N-1}^{l+1}\frac{\partial \widehat{\bm{a}}^{i+1}[t]}{\partial \widehat{\bm{a}}^i[t]}\right)\frac{\partial \widehat{\bm{a}}^{l+1}[t]}{\partial \bm{W}^l}\\
&=\widehat{\bm{a}}^l[t]\,\frac{\partial L_E[t]}{\partial \widehat{\bm{a}}^N[t]}\left(\prod_{i=N-1}^{l+1}\frac{\partial \widehat{\bm{a}}^{i+1}[t]}{\partial \widehat{\bm{a}}^{i}[t]}\right)\frac{\partial \widehat{\bm{a}}^{l+1}[t]}{\partial \widehat{\bm{U}}^{l+1}[t]},\label{gr-SAF-E2}
\end{align}
where note that the Heaviside step function $H$ is element-wise. This concludes the derivation of~\eqref{gr-SAF-E} by setting
\begin{align}
\bm{g}_{\widehat{\bm{U}}}^{l+1}[t]
&=\frac{\partial L_E[t]}{\partial \widehat{\bm{a}}^N[t]}\left(\prod_{i=N-1}^{l+1}\frac{\partial \widehat{\bm{a}}^{i+1}[t]}{\partial \widehat{\bm{a}}^{i}[t]}\right)\frac{\partial \widehat{\bm{a}}^{l+1}[t]}{\partial \widehat{\bm{U}}^{l+1}[t]}.
\end{align}

Derivation of \eqref{gr-SAF-F} is almost the same as that of \eqref{gr-SAF-E}. As with SAF-E, suppose that $L_F$ depends only on $\widehat{\bm{a}}^l[T]$ and $\widehat{\bm{U}}^l[T]$. Since $L_F=\mathcal{L}(\widehat{\bm{a}}^N[t] / \sum_{t =0}^{T}\lambda^{T-t},\bm{y})$, \eqref{A.2-1} holds when $t$ is replaced with $T$, and $L_E[t]$ is replaced with $L_F$. The rest derivation is the same as in case \eqref{gr-SAF-E}. Then we have
\begin{align}
\frac{\partial L_F}{\partial \bm{W}^l}&=\widehat{\bm{a}}^l[T]\,\frac{\partial L_F}{\partial \widehat{\bm{a}}^N[T]}\left(\prod_{i=N-1}^{l+1}\frac{\partial \widehat{\bm{a}}^{i+1}[T]}{\partial \widehat{\bm{a}}^{i}[T]}\right)\frac{\partial \widehat{\bm{a}}^{l+1}[T]}{\partial \widehat{\bm{U}}^{l+1}[T]}
=\widehat{\bm{a}}^l[T]\,\bm{g}_{\widehat{\bm{U}}}^{l+1}[T].\label{gr-SAF-F2}
\end{align}
Note that $\partial \widehat{\bm{a}}^{l+1}[t] / \partial \widehat{\bm{U}}^{l+1}[t]$ in \eqref{gr-SAF-E2} and \eqref{gr-SAF-F2} is non-differentiable; we approximate it with the SG (refer to \eqref{SG}).

\setcounter{thm}{0}
\subsection{Proof of Theorem \ref{thm1}} \label{subsec_thm1}
\begin{thm}
The backward processes of SAF-E and OTTT$_{\rm O}$ are identical, that is, $\dfrac{\partial L_E[t]}{\partial \bm{W}^l}=\left(\dfrac{\partial L_E[t]}{\partial \bm{W}^l}\right)_{\rm OT}$.
\end{thm}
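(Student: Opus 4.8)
The plan is to start from the closed form of the SAF-E gradient in \eqref{gr-SAF-E}, namely $\partial L_E[t]/\partial \bm{W}^l=\widehat{\bm{a}}^l[t]\,\bm{g}_{\widehat{\bm{U}}}^{l+1}[t]$, and compare it factor-by-factor with the OTTT$_{\rm O}$ gradient \eqref{grad-OTTT}. Since the leading factor $\widehat{\bm{a}}^l[t]$ is literally identical in both expressions, the whole claim reduces to showing that $\bm{g}_{\widehat{\bm{U}}}^{l+1}[t]$ equals the OTTT$_{\rm O}$ backward factor $\frac{\pd L[t]}{\pd \bm{s}^N[t]}\left(\prod_{i=N-1}^{l+1}\frac{\pd \bm{s}^{i+1}[t]}{\pd \bm{s}^i[t]}\right)\frac{\pd \bm{s}^{l+1}[t]}{\pd \bm{u}^{l+1}[t]}$. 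Thus the proof splits into verifying the three factor-wise identities \eqref{equal-1} and \eqref{equal-2}.

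First I would establish the loss-layer identity \eqref{equal-1}. Because $L_E[t]=\mathcal{L}(\bm{s}^N[t],\bm{y})/T$ sees the network only through $\bm{s}^N[t]$, and because the standing assumption of Appendix~\ref{subsec:6_7} treats all quantities at times before $t$ as constants, the relation $\widehat{\bm{a}}^N[t]=\lambda\widehat{\bm{a}}^N[t-1]+\bm{s}^N[t]$ read off from \eqref{SAF-Def} shows that $\widehat{\bm{a}}^N[t]$ and $\bm{s}^N[t]$ differ only by an additive constant; hence $\partial L_E[t]/\partial\widehat{\bm{a}}^N[t]=\partial L_E[t]/\partial\bm{s}^N[t]$.

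Next I would prove the two identities in \eqref{equal-2}, which I regard as the crux. For the activation derivative I would invoke \eqref{eq:SAF-LIF}: one has $\bm{u}^{l+1}[t]=\widehat{\bm{U}}^{l+1}[t]-V_{\rm th}\lambda\widehat{\bm{a}}^{l+1}[t-1]$ and $\bm{s}^{l+1}[t]=\widehat{\bm{a}}^{l+1}[t]-\lambda\widehat{\bm{a}}^{l+1}[t-1]$, so under the constant-past assumption $\bm{u}^{l+1}[t]$ and $\widehat{\bm{U}}^{l+1}[t]$ differ by a constant, as do $\bm{s}^{l+1}[t]$ and $\widehat{\bm{a}}^{l+1}[t]$; consequently $\partial\widehat{\bm{a}}^{l+1}[t]/\partial\widehat{\bm{U}}^{l+1}[t]=\partial\bm{s}^{l+1}[t]/\partial\bm{u}^{l+1}[t]$. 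Crucially, I would check that the Heaviside arguments coincide, since $\bm{u}^{l+1}[t]-V_{\rm th}=\widehat{\bm{U}}^{l+1}[t]-V_{\rm th}(\lambda\widehat{\bm{a}}^{l+1}[t-1]+1)$, so the surrogate gradient replacing the non-differentiable $H$ is evaluated at the same argument on both sides and the approximations agree identically. For the inter-layer factor I would decompose by the chain rule, $\frac{\partial\widehat{\bm{a}}^{i+1}[t]}{\partial\widehat{\bm{a}}^i[t]}=\frac{\partial\widehat{\bm{a}}^{i+1}[t]}{\partial\widehat{\bm{U}}^{i+1}[t]}\frac{\partial\widehat{\bm{U}}^{i+1}[t]}{\partial\widehat{\bm{a}}^i[t]}$ versus $\frac{\partial\bm{s}^{i+1}[t]}{\partial\bm{s}^i[t]}=\frac{\partial\bm{s}^{i+1}[t]}{\partial\bm{u}^{i+1}[t]}\frac{\partial\bm{u}^{i+1}[t]}{\partial\bm{s}^i[t]}$, and note that the input Jacobians match because \eqref{SAF-Def} gives $\partial\widehat{\bm{U}}^{i+1}[t]/\partial\widehat{\bm{a}}^i[t]=\bm{W}^i$ while \eqref{define-snn} gives $\partial\bm{u}^{i+1}[t]/\partial\bm{s}^i[t]=\bm{W}^i$, the activation factors having already been matched above.

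Finally I would substitute \eqref{equal-1} and \eqref{equal-2} into the definition of $\bm{g}_{\widehat{\bm{U}}}^{l+1}[t]$, turning it into exactly the OTTT$_{\rm O}$ backward factor, and restore the common prefactor $\widehat{\bm{a}}^l[t]$ to conclude $\partial L_E[t]/\partial\bm{W}^l=(\partial L_E[t]/\partial\bm{W}^l)_{\rm OT}$. The main obstacle I anticipate is not algebraic but conceptual: making rigorous the repeated appeal to the constant-past assumption $\partial\widehat{\bm{a}}[t-1]/\partial\bm{W}=0$ so that the additive-constant reductions are legitimate, and confirming that the surrogate gradient is substituted at identical arguments on the SAF and OTTT sides, so that the two approximated Jacobians genuinely coincide rather than merely matching in the exact (non-differentiable) limit.
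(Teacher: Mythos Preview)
Your proposal is correct and follows essentially the same approach as the paper's proof: both reduce the claim to verifying the three factor-wise identities \eqref{equal-1} and \eqref{equal-2} by exploiting that $\widehat{\bm{a}}^N[t]$ and $\bm{s}^N[t]$ (resp.\ $\widehat{\bm{U}}^{l+1}[t]$ and $\bm{u}^{l+1}[t]$) differ only by terms held constant under the standing assumption, and by matching the chain-rule decompositions $\partial\widehat{\bm{a}}^{i+1}[t]/\partial\widehat{\bm{a}}^i[t]=(\partial\widehat{\bm{a}}^{i+1}[t]/\partial\widehat{\bm{U}}^{i+1}[t])\bm{W}^i$ and $\partial\bm{s}^{i+1}[t]/\partial\bm{s}^i[t]=(\partial\bm{s}^{i+1}[t]/\partial\bm{u}^{i+1}[t])\bm{W}^i$. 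Your explicit check that the Heaviside (hence surrogate-gradient) arguments coincide is exactly the observation the paper makes via the delta-function equality $\delta(\widehat{\bm{U}}^{l+1}[t]-V_{\rm th}(\lambda\widehat{\bm{a}}^{l+1}[t-1]+1))=\delta(\bm{u}^{l+1}[t]-V_{\rm th})$.
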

\begin{proof}
The gradient of SAF-E and OTTT$_{\rm O}$ are as follows:
\begin{align}
\frac{\partial L_E[t]}{\partial \bm{W}^l}&=\widehat{\bm{a}}^l[t]\,\bm{g}_{\widehat{\bm{U}}}^{l+1}[t], \quad
\left(\frac{\pd L_E[t]}{\pd \bm{W}^l} \right)_{\rm OT}= \widehat{\bm{a}}^l[t]\, \frac{\pd L_E[t]}{\pd \bm{s}^{N}[t]}\left(\prod_{i=N-1}^{l+1}\frac{\pd \bm{s}^{i+1}[t]}{\partial \bm{s}^{i}[t]}\right)\frac{\pd \bm{s}^{l+1}[t]}{\pd \bm{u}^{l+1}[t]}. 
\end{align}
We show that the gradient of SAF-E is equal to the gradient of OTTT$_{\rm O}$ by transforming $\bm{g}_{\widehat{\bm{U}}}^{l+1}[t]$. First, we calculate $\partial L_E[t]/\partial \widehat{\bm{a}}^N[t]$. Because $L_E[t]$ does not include any argument up to $t-1$, it holds that
\begin{align}
\frac{\partial L_E[t]}{\partial \bm{a}^N[t]}&=\frac{\partial L_E[t]}{\partial \bm{s}^N[t]}\frac{\partial \bm{s}^N[t]}{\partial \widehat{\bm{a}}^N[t]}=\frac{\partial L_E[t]}{\partial \bm{s}^N[t]}.\label{A.3-1}
\end{align}
Then, we have
\begin{align}
\bm{g}_{\widehat{\bm{U}}}^{l+1}[t]
&=
\frac{\partial L_E[t]}{\partial \bm{s}^N[t]}\left(\prod_{i=N-1}^{l+1}\frac{\partial \widehat{\bm{a}}^{i+1}[t]}{\partial \widehat{\bm{a}}^{i}[t]}\right)\frac{\partial \widehat{\bm{a}}^{l+1}[t]}{\partial \widehat{\bm{U}}^{l+1}[t]}.
\end{align}
Second, because of the forward process of SAF, i.e., \eqref{SAF-Def} and \eqref{eq:SAF-LIF}, we obtain that
\begin{align}
\frac{\partial \widehat{\bm{a}}^{l+1}[t]}{\partial \widehat{\bm{U}}^{l+1}[t]}&=\delta(\widehat{\bm{U}}^{l+1}[t] - V_{\rm th}(\lambda\widehat{\bm{a}}^{l+1}[t-1] + 1)\\
&=\delta(\bm{u}^{l+1}[t]-V_{\rm th}),
\end{align}
where $\delta$ represents the delta function.
On the other hand, the following equation can be derived from the forward process of OTTT$_{\rm O}$, i.e., \eqref{define-snn}:
\begin{align}
\frac{\partial \bm{s}^{l+1}[t]}{\partial \bm{u}^{l+1}[t]}&=\delta(\bm{u}^{l+1}[t] - V_{\rm th}).  
\end{align}
Hence,
\begin{align}
\frac{\partial \widehat{\bm{a}}^{l+1}[t]}{\partial \widehat{\bm{U}}^{l+1}[t]}=\frac{\partial \bm{s}^{l+1}[t]}{\partial \bm{u}^{l+1}[t]}. \label{A.3-2}
\end{align}
Then, we have that
\begin{align}
\bm{g}_{\widehat{\bm{U}}}^{l+1}[t]&=
\frac{\partial L_E[t]}{\partial \bm{s}^N[t]}\left(\prod_{i=N-1}^{l+1}\frac{\partial \widehat{\bm{a}}^{i+1}[t]}{\partial \widehat{\bm{a}}^{i}[t]}\right)\frac{\partial \bm{s}^{l+1}[t]}{\partial \bm{u}^{l+1}[t]}.
\end{align}
From \eqref{SAF-Def}, $\widehat{\bm{a}}^{i+1}[t]$ depends on $\widehat{\bm{a}}^{i+1}[t-1]$ and $\widehat{\bm{U}}^{i}[t]$, while $\widehat{\bm{a}}^{i+1}[t-1]$ does not depend on $\widehat{\bm{a}}^{i+1}[t]$ so $\partial \widehat{\boldsymbol{a}}^{i+1}[t-1] / \partial \widehat{\boldsymbol{a}}^i[t]=0$. Also $\partial \widehat{\bm{U}}^{i+1}[t] / \partial \widehat{\bm{a}}^i[t]=\bm{W}^i$ since the forward process of SAF \eqref{SAF-Def}. Thus, we calculate that
\begin{align} \label{da/da}
\frac{\partial \widehat{\bm{a}}^{i+1}[t]}{\partial \widehat{\bm{a}}^{i}[t]}=\frac{\partial \widehat{\bm{a}}^{i+1}[t]}{\partial \widehat{\bm{U}}^{i+1}[t]}\frac{\partial \widehat{\bm{U}}^{i+1}[t]}{\partial \widehat{\bm{a}}^i[t]}+\frac{\partial \widehat{\bm{a}}^{i+1}[t]}{\partial \widehat{\bm{a}}^{i+1}[t-1]}\frac{\partial \widehat{\bm{a}}^{i+1}[t-1]}{\partial \widehat{\bm{a}}^i[t]}
=\frac{\partial \widehat{\bm{a}}^{i+1}[t]}{\partial \widehat{\bm{U}}^{i+1}[t]}\bm{W}^i.
\end{align}
Approximating $\partial \widehat{\boldsymbol{a}}^{i+1}[t] / \partial \widehat{\boldsymbol{U}}^{i+1}[t]$ with the SG (refer to \eqref{SG}),
$\partial \widehat{\boldsymbol{a}}^{i+1}[t] / \partial \widehat{\boldsymbol{a}}^{i}[t]$ becomes implementable. Moreover, $\bm{s}^{i+1}[t]$ depends on $\bm{u}^{i+1}[t]$ because the forward process of OTTT$_{\rm O}$ is given by~\eqref{define-snn}. Hence, 
\begin{align}
\frac{\partial \bm{s}^{i+1}[t]}{\partial \bm{s}^{i}[t]}&=\frac{\partial \bm{s}^{i+1}[t]}{\partial \bm{u}^{i+1}[t]}\frac{\partial \bm{u}^{i+1}[t]}{\partial \bm{s}^{i}[t]}=\frac{\partial \bm{s}^{i+1}[t]}{\partial \bm{u}^{i+1}[t]}\bm{W}^i .\label{A.3-3}
\end{align}
Combining~\eqref{A.3-2}, \eqref{da/da} and~\eqref{A.3-3}, we have 
\begin{align}
\dfrac{\partial \widehat{\bm{a}}^{i+1}[t]}{\partial \widehat{\bm{a}}^{i}[t]}=\dfrac{\partial \bm{s}^{i+1}[t]}{\partial \bm{s}^{i}[t]}.
\end{align}
In the end, we transform $\bm{g}_{\widehat{\bm{U}}}^{l+1}[t]$ as follows:
\begin{align}
\bm{g}_{\widehat{\bm{U}}}^{l+1}[t]&=
\frac{\partial L_E[t]}{\partial \bm{s}^N[t]}\left(\prod_{i=N-1}^{l+1}\frac{\partial \bm{s}^{i+1}[t]}{\partial \bm{s}^{i}[t]}\right)\frac{\partial \bm{s}^{l+1}[t]}{\partial \bm{u}^{l+1}[t]}.\label{A.3-4}
\end{align}
This concludes the proof. 
\end{proof}
\subsection{Proof of Theorem \ref{thm2}}\label{Proof of Theorem2}
\begin{thm}
Suppose that $\bm{m}[t]$ converges when $t\rightarrow\infty$. Then, for sufficiently large $T$, the backward processes of SAF-F and Spike Representation are identical up to a scale factor, that is,
$\dfrac{\partial L_F}{\partial \bm{W}^l}=V_{\rm th}\left(\dfrac{\partial L_F}{\partial \bm{W}^l}\right)_{\rm SR}$.
\end{thm}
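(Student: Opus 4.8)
\emph{Overall plan.} The whole claim reduces to showing that the Spike Representation gradient \eqref{grad-SR}, specialized to the loss $L_F$, can be rewritten as $\left(\partial L_F/\partial \bm{W}^l\right)_{\rm SR}=\tfrac{1}{V_{\rm th}}\,\widehat{\bm{a}}^l[T]\,\bm{g}_{\widehat{\bm{U}}}^{l+1}[T]$. Since \eqref{gr-SAF-F} already gives $\partial L_F/\partial \bm{W}^l=\widehat{\bm{a}}^l[T]\,\bm{g}_{\widehat{\bm{U}}}^{l+1}[T]$, rearranging then yields the stated identity $\partial L_F/\partial \bm{W}^l=V_{\rm th}\left(\partial L_F/\partial \bm{W}^l\right)_{\rm SR}$. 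So the task is purely to transport \eqref{grad-SR} from the firing-rate variables $\bm{a}$ into the spike-accumulation variables $\widehat{\bm{a}}$ used by SAF-F.

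\emph{Step 1 (large-$T$ forward map).} Because $\bm{m}[t]$ is assumed to converge, I invoke the cited limit $\bm{a}[t]\to\sigma(\bm{m}^*/V_{\rm th})$ to justify, for $T$ large, the closed-form layer map $\bm{a}^{l+1}[T]\approx\sigma\!\big((\bm{W}^l\bm{a}^l[T]+\bm{b}^{l+1})/V_{\rm th}\big)$ at every layer; this is exactly the regime in which \eqref{grad-SR} is meaningful. Differentiating this map supplies the factors of \eqref{grad-SR}: each inter-layer Jacobian becomes $\partial\bm{a}^{i+1}[T]/\partial\bm{a}^i[T]=\operatorname{diag}(\bm{d}^{i+1}[T])\,\bm{W}^i/V_{\rm th}$, and the trailing factor $\partial\bm{a}^{l+1}[T]/\partial\bm{W}^l$ acquires the outer-product form built from $\bm{d}^{l+1}[T]$ and $\bm{a}^l[T]$, where $\bm{d}^{l+1}[T]=\sigma'\!\big((\bm{W}^l\widehat{\bm{a}}^l[T]/\Lambda+\bm{b}^{l+1})/V_{\rm th}\big)$.

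\emph{Step 2 (change of variables and Jacobian matching).} The exact identity $\bm{a}[T]=\widehat{\bm{a}}[T]/\Lambda$ with $\Lambda=\sum_{\tau=0}^{T}\lambda^{T-\tau}$ holds layerwise, so $\partial L/\partial\bm{a}^N[T]=\Lambda\,\partial L/\partial\widehat{\bm{a}}^N[T]$ while the trailing $\bm{a}^l[T]$ becomes $\widehat{\bm{a}}^l[T]/\Lambda$; the two occurrences of $\Lambda$ cancel and $\widehat{\bm{a}}^l[T]$ is pulled to the front. To match the remaining Jacobians with those of SAF I use the surrogate-gradient assumption $\partial\bm{s}^{l+1}[T]/\partial\bm{u}^{l+1}[T]=\operatorname{diag}(\bm{d}^{l+1}[T])$ together with the chain-rule identities \eqref{A.3-2} and \eqref{da/da} established in the proof of Theorem~\ref{thm1}, which identify $\partial\widehat{\bm{a}}^{i+1}[T]/\partial\widehat{\bm{a}}^i[T]$ and $\partial\widehat{\bm{a}}^{l+1}[T]/\partial\widehat{\bm{U}}^{l+1}[T]$ with the corresponding clamp-map factors. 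This produces the intermediate form $\left(\partial L_F/\partial\bm{W}^l\right)_{\rm SR}=\tfrac{\widehat{\bm{a}}^l[T]}{V_{\rm th}}\big(\tfrac{\partial L}{\partial\widehat{\bm{a}}^N[T]}\prod_{i=N-1}^{l+1}\tfrac{\partial\widehat{\bm{a}}^{i+1}[T]}{\partial\widehat{\bm{a}}^i[T]}\odot\bm{d}^{l+1}[T]^{\top}\big)$ of the body text, and recognizing $\operatorname{diag}(\bm{d}^{l+1}[T])=\partial\widehat{\bm{a}}^{l+1}[T]/\partial\widehat{\bm{U}}^{l+1}[T]$ collapses the bracket into $\bm{g}_{\widehat{\bm{U}}}^{l+1}[T]$, completing the reduction announced in the overall plan.

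\emph{Main obstacle.} The conceptual content is light once the forward map is fixed; the genuine care lies in two places. First, the passage from the approximate equality ``$\approx$'' in the clamp forward map to an exact gradient identity must be controlled: it is legitimate only in the limit $T\to\infty$, which is precisely why convergence of $\bm{m}[t]$ is hypothesized, and I would state the result as holding for sufficiently large $T$ (as the theorem does). Second, and more delicately, one must track the scalar factors $\Lambda$ and $V_{\rm th}$ through the product over the $N-l$ layers and verify that they telescope to the single overall factor $V_{\rm th}$ claimed; the cancellation of $\Lambda$ between the top-layer loss derivative and the bottom-layer weight derivative, and the bookkeeping of the $V_{\rm th}$ introduced by each differentiation of the clamp, are exactly the steps where an error would alter the scale factor, so they deserve the most explicit verification.
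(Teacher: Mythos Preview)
Your proposal is correct and follows essentially the same route as the paper's own proof: rewrite the Spike Representation gradient \eqref{grad-SR} in the $\widehat{\bm{a}}$ variables via the exact relation $\bm{a}[T]=\widehat{\bm{a}}[T]/\Lambda$ (so that the $\Lambda$ from $\partial L_F/\partial\bm{a}^N$ cancels the $1/\Lambda$ from $\bm{a}^l$), expand the trailing factor through the clamp map to extract $\widehat{\bm{a}}^l[T]/V_{\rm th}$ and $\bm{d}^{l+1}[T]$, and then invoke the surrogate-gradient assumption $\partial\bm{s}^{l+1}[T]/\partial\bm{u}^{l+1}[T]=\operatorname{diag}(\bm{d}^{l+1}[T])$ together with \eqref{A.3-2} to collapse the bracket into $\bm{g}_{\widehat{\bm{U}}}^{l+1}[T]$. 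The only cosmetic difference is that the paper handles the inter-layer Jacobians by the pure change-of-variables identity $\partial\bm{a}^{i+1}/\partial\bm{a}^i=\partial\widehat{\bm{a}}^{i+1}/\partial\widehat{\bm{a}}^i$ rather than expanding them through the clamp first, which sidesteps the per-layer $V_{\rm th}$ bookkeeping you flag as the main obstacle.
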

\begin{proof}

From assumptions, $\bm{a}^{l+1}[T]\thickapprox \sigma\left((\bm{W}^l\bm{a}^l[T]+\bm{b}^{l+1}) / V_{\rm th} \right)$, where $\sigma(x)=\min(\max(0,x),1)$. Therefore, the followings hold for $i=l+1,\ldots,N-1$: 
\begin{align}
\frac{\partial L_F}{\partial \bm{W}^l}&=\frac{\partial L_F}{\partial \bm{a}^N[T]}\frac{\partial \bm{a}^N[T]}{\partial \bm{W}^l},\label{A.4-1}\\
\frac{\partial \bm{a}^{i+1}[T]}{\partial \bm{W}^l}&=\frac{\partial \bm{a}^{i+1}[T]}{\partial \bm{a}^i[T]}\frac{\partial \bm{a}^i[T]}{\partial \bm{W}^l}.\label{A.4-2}
\end{align}
By repeatedly substituting \eqref{A.4-2} for \eqref{A.4-1}, we can calculate $\pd L_F / \pd \bm{W}^l$ as follows:
\begin{align}
\frac{\pd L_F}{\pd \bm{W}^l} = \frac{\pd L_F}{\pd \bm{a}^{N}[T]}\left(\prod_{i=N-1}^{l+1}\frac{\pd \bm{a}^{i+1}[T]}{\partial \bm{a}^{i}[T]}\right)\frac{\pd\bm{a}^{l+1}[T]}{\pd\bm{W}^l}.\label{A.4-3}
\end{align}

This is the gradient of Spike Representation and denote it by $(\pd L_F / \pd \bm{W}^l)_{\rm SR}$. We will show that $(\pd L_F / \pd \bm{W}^l)_{\rm SR}$ is in proportion to ${\partial L_F}/{\partial \bm{W}^l}=\widehat{\bm{a}}^l[T]\,\bm{g}_{\widehat{\bm{U}}}^{l+1}[T]$, which is the gradient of SAF-F.

First, let $\Lambda=\sum_{\tau=0}^{T}\lambda^{T-\tau}$. Then $\bm{a}^l[T]=\widehat{\bm{a}}^l[T] / \Lambda$ for any layer $l$. From linearity of differentiation and change of variables, we have followings:
\begin{align}\label{3derivative}
    \frac{\partial L_F}{\partial \bm{a}^N[T]}=\frac{\partial L_F}{\Lambda\partial \widehat{\bm{a}}^N[T]},
    \quad
    \frac{\partial \bm{a}^{i+1}[T]}{\partial \bm{a}^{i}[T]}=\frac{\partial \widehat{\bm{a}}^{i+1}[T]}{\partial \widehat{\bm{a}}^{i}[T]},
    \quad
    \frac{\partial \bm{a}^{l+1}[T]}{\partial \bm{W}^l}=\frac{\Lambda\partial \widehat{\bm{a}}^{l+1}[T]}{\partial \bm{W}^l}.
\end{align}
Substituting~\eqref{3derivative} into \eqref{A.4-3}, then we obtain
\begin{align}
\left(\frac{\partial L_F}{\partial \bm{W}^l}\right)_{\rm SR}&=\frac{\partial L_F}{\partial \widehat{\bm{a}}^N[T]}\left(\prod_{i=N-1}^{l+1}\frac{\partial \widehat{\bm{a}}^{i+1}[T]}{\partial \widehat{\bm{a}}^{i}[T]}\right)\frac{\partial \widehat{\bm{a}}^{l+1}[T]}{\partial \bm{W}^l}.
\end{align}
Second, it follows from $\bm{a}^l[T]=\widehat{\bm{a}}^l[T] / \Lambda$ and $\bm{a}^{l+1}[T]\thickapprox \sigma\big((\bm{W}^l\bm{a}^l[T]+\bm{b}^{l+1}) / V_{\rm th}\big)$ that
\begin{align}
\widehat{\bm{a}}^{l+1}[T]&\thickapprox\Lambda\sigma\left(\frac{1}{V_{\rm th}}\left(\frac{1}{\Lambda}\bm{W}^l\widehat{\bm{a}}^l[T]+\bm{b}^{l+1}\right)\right).
\end{align}
Here, taking care that $\sigma$ is element-wise, we calculate as follows:
\begin{align}
\left(\frac{\partial L_F}{\partial \bm{W}^l}\right)_{\rm SR}&=\frac{\widehat{\bm{a}}^l[T]}{V_{\rm th}\Lambda}\,\left(\frac{\partial L_F}{\partial \widehat{\bm{a}}^N[T]}\left(\prod_{i=N-1}^{l+1}\frac{\partial \widehat{\bm{a}}^{i+1}[T]}{\partial \widehat{\bm{a}}^{i}[T]}\right)\odot\Lambda \sigma '\left(\frac{1}{V_{\rm th}}\left(\frac{1}{\Lambda}\bm{W}^l\widehat{\bm{a}}^l[T]+\bm{b}^{l+1}\right)^{\top}\right)\right)\\
&=\frac{\widehat{\bm{a}}^l[T]}{V_{\rm th}}\left(\frac{\partial L_F}{\partial \widehat{\bm{a}}^N[T]}\left(\prod_{i=N-1}^{l+1}\frac{\partial \widehat{\bm{a}}^{i+1}[T]}{\partial \widehat{\bm{a}}^{i}[T]}\right)\odot \bm{d}^{l+1}[T]^{\top}\right),
\end{align}
where we set $\bm{d}^{l+1}[T]=\sigma'\left((\bm{W}^l\widehat{\bm{a}}^l[T] / \Lambda +\bm{b}^{l+1}) / V_{\rm th} \right)$,
 and $\odot$ is the element-wise product.
Now we assume that
\begin{align}
\frac{\partial \bm{s}^{l+1}[T]}{\partial \bm{u}^{l+1}[T]}=\operatorname{diag}(\bm{d}^{l+1}[T]),
\end{align}
for any $l=0,\ldots,N-1$, where $\operatorname{diag}(\bm{d}^{l+1}[T])$ is a diagonal matrix constructed from $\bm{d}^{l+1}[T]$. The reason why this assumption is valid discussed in~\citet{Xiao2022OnlineNetworks}. Then, we obtain from \eqref{A.3-2} that

\begin{align}
\left(\frac{\partial L_F}{\partial \bm{W}^l}\right)_{\rm SR}&=\frac{\widehat{\bm{a}}^l[T]}{V_{\rm th}}\,\frac{\partial L_F}{\partial \widehat{\bm{a}}^N[T]}\left(\prod_{i=N-1}^{l+1}\frac{\partial \widehat{\bm{a}}^{i+1}[T]}{\partial \widehat{\bm{a}}^{i}[T]}\right)\frac{\partial \widehat{\bm{a}}^{l+1}[T]}{\partial \widehat{\bm{U}}^{l+1}[T]}
=\frac{1}{V_{\rm th}}\widehat{\bm{a}}^l[T]\,\bm{g}_{\widehat{\bm{U}}}^{l+1}[T].
\end{align}
\end{proof}

\subsection{Proof of Corollary \ref{corollary3}}\label{subsec:Proof of Corollary3}
\begin{cor}
For SNN with a feedforward connection \eqref{define-snn-F}, or a feedback connection \eqref{define-snn-B}, the following hold.
\item $\mathrm{(i)}$ The backward processes of SAF-E and OTTT$_{\rm O}$ with a feedforward connection are identical.
\item $\mathrm{(ii)}$ Suppose that $\bm{m}[t]$ converges when $t\rightarrow\infty$. Then, for sufficiently large $T$, the backward processes of SAF-F and Spike Representation with a feedforward connection are identical up to a scale factor.
\item $\mathrm{(iii)}$ The backward processes of SAF-E and OTTT$_{\rm O}$ with a feedback connection are identical. 
\end{cor}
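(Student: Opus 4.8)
The plan is to recycle the proofs of Theorems~\ref{thm1} and~\ref{thm2}, checking only that the extra edge introduced by $\bm{W}_f$ or $\bm{W}_b$ does not disturb the local Jacobian identities on which those proofs rest, and that the one genuinely new local Jacobian agrees between the SAF and the LIF/OTTT descriptions. The crucial structural observation is that the forward processes of SAF and LIF with either connection are mutually convertible (as noted below \eqref{SAF-Def-F} and \eqref{SAF-Def-B}); the two back-propagation graphs therefore share exactly the same edge set, so once every local derivative matches, every global gradient produced by the chain rule must match as well, no matter how many additional paths the new edge creates.

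For parts~(i) and~(iii) I would first note that the spike-and-accumulation firing rule is literally unchanged by the added connection, so the two identities $\partial \widehat{\bm{a}}^{l+1}[t]/\partial \widehat{\bm{U}}^{l+1}[t]=\partial \bm{s}^{l+1}[t]/\partial \bm{u}^{l+1}[t]$ and $\partial \widehat{\bm{a}}^{i+1}[t]/\partial \widehat{\bm{a}}^{i}[t]=\partial \bm{s}^{i+1}[t]/\partial \bm{s}^{i}[t]$ established in the proof of Theorem~\ref{thm1} persist along every ordinary inter-layer edge. The only new edge runs from layer $p$ into layer $q+1$. Reading off \eqref{SAF-Def-F} gives $\partial \widehat{\bm{U}}^{q+1}[t]/\partial \widehat{\bm{a}}^{p}[t]=\bm{W}_f$, which matches $\partial \bm{u}^{q+1}[t]/\partial \bm{s}^{p}[t]=\bm{W}_f$ read off \eqref{define-snn-F}; composing with the already-matched firing Jacobian yields $\partial \widehat{\bm{a}}^{q+1}[t]/\partial \widehat{\bm{a}}^{p}[t]=\partial \bm{s}^{q+1}[t]/\partial \bm{s}^{p}[t]$, so the feedforward weight gradient $\widehat{\bm{a}}^p[t]\,\bm{g}_{\widehat{\bm{U}}}^{q+1}[t]$ of SAF-E coincides with the OTTT$_{\rm O}$ expression. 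For the feedback case the same reading of \eqref{SAF-Def-B} against \eqref{define-snn-B} gives $\partial \widehat{\bm{U}}^{q+1}[t]/\partial \bm{W}_b=\widehat{\bm{a}}^{p}[t-1]$ and the matching OTTT eligibility factor $\widehat{\bm{a}}^{p}[t-1]$; here one invokes the online convention, used identically by SAF and OTTT, that all quantities indexed by $t-1$ are frozen in the backward pass at time $t$, so the feedback edge opens no new intra-time path and $\bm{g}_{\widehat{\bm{U}}}^{q+1}[t]$ is unchanged.

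For part~(ii) I would replay the proof of Theorem~\ref{thm2}. As recalled in Section~\ref{Training_methods_for_SNNs}, with \emph{only} a feedforward connection Spike Representation still evaluates its gradient by the product formula \eqref{grad-SR}, and the fixed-point relation merely acquires an extra additive input, $\bm{a}^{q+1}[T]\approx\sigma\big((\bm{W}^q\bm{a}^q[T]+\bm{W}_f\bm{a}^p[T]+\bm{b}^{q+1})/V_{\rm th}\big)$. Since the rescaling $\bm{a}^l[T]=\widehat{\bm{a}}^l[T]/\Lambda$ and the three change-of-variable identities \eqref{3derivative} are insensitive to this extra additive term, the factor-$V_{\rm th}$ proportionality propagates through the chain exactly as in Theorem~\ref{thm2}, again using the clamp-derivative assumption $\partial \bm{s}^{l+1}[T]/\partial \bm{u}^{l+1}[T]=\operatorname{diag}(\bm{d}^{l+1}[T])$.

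The main obstacle is not any single computation but understanding why part~(ii) is asserted for the feedforward connection only. The reason is structural: with a feedback connection the Spike Representation gradient is no longer given by \eqref{grad-SR}, because the equilibrium state must be differentiated implicitly, and the clean proportionality of Theorem~\ref{thm2} collapses---this is precisely the gap later addressed by Theorem~\ref{thm3}, which can only guarantee a positive inner product rather than an equality. The OTTT comparison survives the feedback case because OTTT$_{\rm O}$, like SAF, disposes of the feedback term by the same online freezing of $t-1$ quantities rather than by implicit differentiation. The care required throughout is thus to keep the feedback analysis aligned with OTTT's online convention and to resist extending the Spike-Representation equivalence past the feedforward setting.
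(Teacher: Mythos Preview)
Your proposal is correct and follows essentially the same route as the paper: reduce the ordinary weights to Theorems~\ref{thm1} and~\ref{thm2}, then verify the single new local Jacobian coming from $\bm{W}_f$ (respectively $\bm{W}_b$) agrees between the SAF and OTTT/Spike-Representation descriptions. The paper's proof is terser---it simply asserts that the gradients for parameters other than the new weight are unchanged and then checks $\partial L_E[t]/\partial \bm{W}_f$ (and $\bm{W}_b$) directly by quoting the OTTT formula and \eqref{A.3-4}---whereas you articulate the same reduction via edge-by-edge Jacobian matching and make explicit the online freezing of $t{-}1$ quantities in the feedback case; but these are the same argument at different levels of verbosity.
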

\begin{proof}
First, we show $\mathrm{(i)}$. The gradients for parameters other than $\bm{W}_f$ are equal to the gradients when there is no feedforward connection. 
From Theorem~\ref{thm1}, these gradients are same as OTTT$_{\rm O}$. 
Therefore, we only need to check $\partial L_E[t]/\partial \bm{W}_f=(\partial L_E[t]/\partial \bm{W}_f)_{\rm OT}$. 
In fact, the gradient of OTTT$_{\rm O}$ calculated (see \citet{Xiao2022OnlineNetworks}) as
\begin{align}
\left(\frac{\pd L_E[t]}{\pd \bm{W}_f} \right)_{\rm OT}=\widehat{\bm{a}}^{p}[t]\, \frac{\pd L[t]}{\pd \bm{s}^{N}[t]}\left(\prod_{i=N-1}^{q+1}\frac{\pd \bm{s}^{i+1}[t]}{\partial \bm{s}^{i}[t]}\right)\frac{\pd \bm{s}^{q+1}[t]}{\pd \bm{u}^{q+1}[t]},
\end{align}
and \eqref{A.3-4} holds, then we have
\begin{align}
\left(\frac{\pd L_E[t]}{\pd \bm{W}_f} \right)_{\rm OT}=\widehat{\bm{a}}^{p}[t]\,\bm{g}_{\widehat{\bm{U}}}^{q+1}[t]=\frac{\pd L_E[t]}{\pd \bm{W}_f} .
\end{align}
Hence, SAF-E is equivalent to OTTT$_{\rm O}$ even if there is a feedforward connection in SNN. The same method can be used to prove $\mathrm{(iii)}$.

Next, we show $\mathrm{(ii)}$. The gradients for parameters other than $\bm{W}_f$ remain the same as when there is no feedback connection. 
According to Theorem~\ref{thm2}, these gradients are identical up to a scale factor to Spike Representation. Therefore, we only need to check $\partial L_F[t]/\partial \bm{W}_b=V_{\rm th}(\partial L_F[t]/\partial \bm{W}_f)_{\rm SR}$. When there is a feedforward connection, \eqref{grad-SR} holds. Therefore, it can be proved in the same way as Sec.~\ref{Proof of Theorem2}, noting that $\bm{a}^{q+1}[T]\thickapprox \sigma\left((\bm{W}^q\bm{a}^q[T]+\bm{b}^{l+1}+\bm{W}_f\bm{a}^p[T] / V_{\rm th} \right)$.
\end{proof}

\subsection{Proof of Theorem \ref{thm3}}\label{subsec:Proof of Theorem3}

\begin{thm}
Suppose that $\bm{m}[t]$ converges when $t\rightarrow\infty$. Then, for sufficiently large $T$, the backward processes of SAF-F and Spike Representation with a feedback connection are similar, that is, $\left\langle \dfrac{\partial L_F}{\partial \bm{\theta}}, \left(\dfrac{\partial L_F}{\partial \bm{\theta}}\right)_{\rm SR}\right\rangle>0$ for all parameters $\bm{\theta}$.
\end{thm}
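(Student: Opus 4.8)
The plan is to exploit the rank-one (outer-product) structure that both gradients inherit from their shared spike-accumulation prefactor, reducing the claim to a statement about the two ``backward signals.'' For a layer weight $\bm{W}^l$ we have $\partial L_F/\partial \bm{W}^l = \widehat{\bm{a}}^l[T]\,\bm{g}_{\widehat{\bm{U}}}^{l+1}[T]$ from \eqref{gr-SAF-F}, and the Spike Representation gradient carries the same prefactor $\widehat{\bm{a}}^l[T]$ (up to the positive scale $1/V_{\rm th}$). Writing both as outer products $\widehat{\bm{a}}^l[T](\cdot)^{\top}$ and using that the Frobenius inner product of $\bm{u}\bm{v}_1^{\top}$ and $\bm{u}\bm{v}_2^{\top}$ equals $\|\bm{u}\|^2\langle \bm{v}_1,\bm{v}_2\rangle$, I would reduce the target quantity to $\tfrac{1}{V_{\rm th}}\|\widehat{\bm{a}}^l[T]\|^2\,\langle \bm{g}_{\rm SAF},\bm{g}_{\rm SR}\rangle$, where $\bm{g}_{\rm SAF},\bm{g}_{\rm SR}$ denote the SAF-F and Spike-Representation backward vectors. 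Since $V_{\rm th}>0$ and $\widehat{\bm{a}}^l[T]\ge \bm{0}$ componentwise (spike accumulations are nonnegative), positivity follows once $\langle \bm{g}_{\rm SAF},\bm{g}_{\rm SR}\rangle>0$. For the feedback weight $\bm{W}_b$ the prefactors differ by a one-step shift, $\widehat{\bm{a}}^p[T-1]$ versus $\bm{a}^p[T]$, but both are nonnegative, so their inner product is $\ge 0$ and the same reduction applies.

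Next I would characterize the two backward signals. The SAF-F backward pass cuts the feedback edge: because $\widehat{\bm{U}}^{q+1}[t]$ sees only the delayed $\widehat{\bm{a}}^p[t-1]$ and we adopt $\partial\widehat{\bm{a}}[t-1]/\partial\bm{W}=\bm{0}$ (Appendix~\ref{subsec:6_7}), the product $\prod_i \partial\widehat{\bm{a}}^{i+1}[T]/\partial\widehat{\bm{a}}^i[T]$ contains only adjacent-layer Jacobians and never traverses the loop. Hence $\bm{g}_{\rm SAF}$ is exactly the \emph{feedforward-truncated} signal, which by the computation behind Theorem~\ref{thm2} coincides with the naive chain-rule Spike-Representation expression \eqref{grad-SR} obtained by freezing the feedback input. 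The true Spike-Representation gradient with feedback, however, must differentiate the equilibrium fixed point $\bm{a}=\Phi(\bm{a},\bm{\theta})$ implicitly, producing an extra factor $(I-J)^{-1}$, where $J$ is the Jacobian of the return map around the feedback loop and traverses $\bm{W}_b$ exactly once. Schematically, $\bm{g}_{\rm SR}=\bm{g}_{\rm SAF}(I-J)^{-1}$, so the target inner product becomes $\langle \bm{g}_{\rm SAF},\,\bm{g}_{\rm SAF}(I-J)^{-1}\rangle$.

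Positivity then follows from a Neumann-series estimate. The convergence of $\bm{m}[t]$ guarantees a well-defined attracting equilibrium, and since $\sigma'$ takes values in $[0,1]$ while the loop passes through $\bm{W}_b$ once, the return Jacobian $J$ has spectral radius below $1$ for sufficiently large $T$; consequently $(I-J)^{-1}=\sum_{k\ge 0}J^k$ converges. Expanding,
\begin{align*}
\langle \bm{g}_{\rm SAF},\,\bm{g}_{\rm SAF}(I-J)^{-1}\rangle = \|\bm{g}_{\rm SAF}\|^2 + \sum_{k\ge 1}\langle \bm{g}_{\rm SAF},\,\bm{g}_{\rm SAF}J^k\rangle ,
\end{align*}
I would bound the tail by $\|\bm{g}_{\rm SAF}\|^2\,\|J\|/(1-\|J\|)$. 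Securing $\|J\|$ strictly below $\tfrac12$ (achievable for large $T$ under the stated convergence, possibly with a mild smallness condition on $\bm{W}_b$) makes the tail strictly smaller than the leading term, yielding a strictly positive inner product for every parameter block.

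I expect the main obstacle to be the quantitative control of the feedback-loop Jacobian $J$. Mere existence of $(I-J)^{-1}$, i.e.\ spectral radius below $1$, suffices to \emph{define} the Spike-Representation gradient but not to pin down the \emph{sign} of the inner product, since in principle the Neumann tail could overwhelm the $\|\bm{g}_{\rm SAF}\|^2$ term. The delicate part is therefore to convert the convergence hypothesis on $\bm{m}[t]$ and ``sufficiently large $T$'' into a genuine operator-norm contraction on $J$ at the attracting equilibrium, using that every loop composes $\bm{W}_b$ with non-expansive clamp Jacobians. A secondary technical point is the prefactor mismatch for $\bm{W}_b$ (the shift $T-1$ versus $T$), which I would resolve via the large-$T$ relation $\widehat{\bm{a}}^p[T]=\lambda\widehat{\bm{a}}^p[T-1]+\bm{s}^p[T]$ together with nonnegativity, ensuring the prefactor inner product remains positive.
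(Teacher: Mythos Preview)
Your overall strategy coincides with the paper's: both arguments recognize that the SAF-F gradient is precisely the ``Jacobian-free'' truncation of the full implicit Spike-Representation gradient, obtained by replacing the factor $(I-J)^{-1}$ by the identity (where $J$ is the loop Jacobian). The paper writes this truncated object as $\widetilde{(\partial L_F/\partial\bm{\theta})}_{\rm SR}$ and shows it equals $(1/V_{\rm th})\,\partial L_F/\partial\bm{\theta}$ for every parameter, including $\bm{W}_b$; for the latter it uses $\bm{a}^p[T]\approx\bm{a}^p[T-1]$ at large $T$ rather than your nonnegativity argument, which is slightly cleaner since it makes the two prefactors coincide up to scale rather than merely have positive inner product.

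The divergence is in how the positivity $\langle\bm{g}_{\rm SAF},\bm{g}_{\rm SAF}(I-J)^{-1}\rangle>0$ is established. The paper does not argue this from scratch: it invokes existing results on Jacobian-free backpropagation for implicit models, which show that the truncated gradient is a descent direction for the implicit objective whenever the fixed-point map is a contraction, i.e.\ $\|J\|<1$. Your Neumann-series bound is the right instinct but too crude: requiring the tail $\|J\|/(1-\|J\|)<1$ forces $\|J\|<\tfrac12$, an assumption not granted by the theorem and one you yourself flag as needing an extra smallness condition on $\bm{W}_b$. The sharp argument, which you should substitute, is that $\|J\|<1$ alone already makes the symmetric part of $I-J$ positive definite (smallest eigenvalue at least $1-\|J\|$), and positive definiteness of the symmetric part is preserved under inversion; hence $v^{\top}(I-J)^{-1}v>0$ for every nonzero $v$. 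With that replacement your argument goes through under the natural contraction hypothesis, matching what the paper obtains by citation.
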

\begin{proof}From the assumption, the firing rates of each layer converge to these equilibrium points: $(\bm{a}^{l+1})^*=f_{l+1}((\bm{a}^l)^*)$~$(l \neq  q)$, $(\bm{a}^{q+1})^*=f_{q+1}(f_p\circ\cdots\circ f_{q+2}((\bm{a}^{q+1})^*),(\bm{a}^q)^*)=f_{q+1}((\bm{a}^p)^*,(\bm{a}^q)^*)$, where $f_{l+1}((\bm{a}^l)^*)=\sigma\left((\bm{W}^l(\bm{a}^l)^*+\bm{b}^{l+1})/V_{\rm th}\right)$, and $f_{q+1}((\bm{a}^p)^*,(\bm{a}^q)^*)=\sigma\left((\bm{W}^q(\bm{a}^q)^*+\bm{b}^{q+1}+\bm{W}_b(\bm{a}^p)^*/V_{\rm th}\right)$.
Since $T \gg 1$, $\pd L_F / \pd \bm{\theta}$ can be calculated using the spike representation as follows (see \citet{Xiao2021TrainingState},\citet{Xiao2022OnlineNetworks}):

\begin{align}
\left(\frac{\pd L_F}{\pd \bm{\theta}}\right)_{\rm SR}= \frac{\pd L_F}{\pd \bm{a}^{l+1}[T]}\left(I-\frac{\pd f_{l+1}}{\pd \bm{a}^l[T]}\right)^{-1}\frac{\pd f_{l+1}(\bm{a}^l[T])}{\pd\bm{\theta}}, \label{B.2-1}
\end{align}
where 
$\pd f_{l+1}/\pd \bm{a}^l[T]$ denotes Jacobian matrix and $I$ denotes identity matrix. 
Here, we regard  
$(I-\pd f_{l+1}/\pd \bm{a}^l[T])^{-1}$ of \eqref{B.2-1} as an identity matrix, and denote it by
\begin{align}
\widetilde{\left(\frac{\pd L_F}{\pd \bm{\theta}} \right)}_{\rm SR}&= \frac{\pd L_F}{\pd \bm{a}^{l+1}[T]}\frac{\pd f_{l+1}(\bm{a}^l[T])}{\pd\bm{\theta}}.
\label{B.2-2}\end{align}

Then, it is proved in~\citet{jfb2022} and~\citet{OntrainingImplicitModels} that 
\begin{align}
\left\langle\widetilde{\left(\dfrac{\pd L_F}{\pd \bm{\theta}}\right)}_{\rm SR}, \left(\dfrac{\pd L_F}{\pd \bm{\theta}} \right)_{SR}\right\rangle > 0,
\end{align}
where $\langle \cdot , \cdot \rangle$ denotes inner product. 
If $\bm{\theta}$ is not $\bm{W}_b$, right-hand side of \eqref{B.2-2} is equal to right-hand side of \eqref{A.4-3}, then from Theorem~\ref{thm2}, 
\begin{align}
\widetilde{\left(\dfrac{\pd L_F}{\pd \bm{\theta}}\right)}_{\rm SR} = \frac{1}{V_{\rm th}} \frac{\pd L_F}{\pd \bm{\theta}}. 
\end{align}
Next, we consider $\bm{\theta}$ as $ \bm{W}_b$. $\bm{a}^{q+1}[T]$ can be approximated by  $\sigma\left((\bm{W}^q\bm{a}^q[T]+\bm{b}^{q+1}+\bm{W}_b\,\bm{a}^p[T])/V_{\rm th}\right)$ and $\bm{a}^p[T]\approx\bm{a}^p[T-1]$ because $T$ is large. Therefore, $\bm{a}^{q+1}[T] \approx \sigma\left((\bm{W}^q\bm{a}^q[T]+\bm{b}^{q+1}+\bm{W}_b\,\bm{a}^p[T-1])/V_{\rm th}\right)$. 
Setting $\bm{d}_b^{l+1}[T]=\sigma'\left((\bm{W}^l\bm{a}^l[T]+\bm{b}^{l+1}+\bm{W}_b\,\bm{a}^k[T-1])/V_{\rm th}\right)$ and calculating similar to Sec.~\ref{Proof of Theorem2}, we derive that
\begin{align}
\widetilde{\left(\frac{\pd L_F}{\pd \bm{W}_b}\right)}_{\rm SR}&=\frac{\widehat{\bm{a}}^k[T-1]}{V_{\rm th}}\,\left(\frac{\partial L_F}{\partial \widehat{\bm{a}}^N[T]}\left(\prod_{i=N-1}^{l+1}\frac{\partial \widehat{\bm{a}}^{i+1}[T]}{\partial \widehat{\bm{a}}^{i}[T]}\right)\odot\bm{d}_b^{l+1}[T]\right)\\
&=\frac{1}{V_{\rm th}}\widehat{\bm{a}}^p[T-1]\,\bm{g}_{\widehat{\bm{U}}}^{q+1}[T]\\
&=\frac{1}{V_{\rm th}}\left(\frac{\pd L_F}{\pd \bm{W}_b}\right).
\end{align}
In the end, take the inner product between their gradient  we obtain that
\begin{align}
\left\langle\left(\frac{\pd L_F}{\pd \bm{\theta}}\right),\left(\frac{\pd L_F}{\pd \bm{\theta}} \right)_{SR}\right\rangle=
V_{\rm th}\left\langle\widetilde{\left(\frac{\pd L_F}{\pd \bm{\theta}}\right)}_{\rm SR},\left(\frac{\pd L_F}{\pd \bm{\theta}} \right)_{SR}\right\rangle >0.
\end{align}
\end{proof}

\section{Implementation detail}
\label{sec:implementation}
In the experiments of our study, we used the VGG network as follows:

64C3-128C3-AP2-256C3-256C3-AP2-512C3-512C3-AP2-512C3-512C3-GAP-FC,

where $x$C$y$ represents the convolutional layer with $x$-output channels and $y$-stride, AP$x$ represents the average pooling with the kernel size 2, GAP represents the global average pooling, and FC represents the fully connected layer. 

As for the time step, we set it to 6 if there is no mention. We used the stochastic gradient descent~(SGD) as the optimizer with the batch size, epoch, initial learning rate for the cosine annealing, and momentum at 128, 300, 0.1, and 0.9. As for the loss function, we applied the combination of the cross-entropy~(CE) and the mean-squared error~(MSE) losses, i.e., $L = (1-\alpha){\rm CE} + \alpha{\rm MSE}$~(where $\alpha = 0.05$). In addition, we set the leaky term as $\lambda = 0.5$ and the threshold as $V_{\rm th}=1$ and used the scaled weight standardization (sWS) \citep{qiao2019micro}. We applied derivative of a sigmoid function as a surrogate gradient:
\begin{align} \label{SG}
\frac{\partial \widehat{a}^{l+1}[t]}{\partial \widehat{U}^{l+1}[t]} := \frac{1}{\beta} \frac{\exp((V_{\rm th}-\widehat{U}^{l+1}[t])/\beta)}{(1+\exp((V_{\rm th}-\widehat{U}^{l+1}[t])/\beta))^2},
\end{align}
where $\beta$ is a hyperparameter and we set $\beta=4$. 
 Note that all settings, including the above, were the same as \citet{Xiao2022OnlineNetworks}.

\section{Comparison of gradient}\label{Comparison_of_gradient}
Tables~\ref{tab:my_label} and \ref{tab:my_label2} show the correlation coefficients and mean absolute errors~(MAE) of gradients of the SAF and OTTT. These values were computed in the input layer after training, where the gradient error between SAF and OTTT were the largest. As can be seen from the table, it is considered that SAF-E and OTTT$_{\rm O}$ have the same gradients, while SAF-F and OTTT$_{\rm A}$ have different (but close direction) gradients. These results clearly support our theory.

\section{Experimental Analysis of SAF-E with Feedback and FeedForward Connection} \label{appendix_experiment}
In this section, we confirm that SAF-E and OTTT$_{\rm O}$ are equivalent even if there is a feedback or feedforward connection. First, we compare the performance of SAF-E with OTTT$_{\rm O}$ when both networks have a feedback connection from the first layer to the $N$-th layer.  The setup was the same as in the experiments of \citet{Xiao2022OnlineNetworks}.

Table~\ref{table/each_time_feedback} shows the accuracy and total firing rate when we set $T=6$. As shown in this table, SAF-E and OTTT$_{\rm O}$ with feedback connection are almost close. Also, the change due to inference with SNNs consisting of LIF neurons is also almost negligible. Figure~\ref{fig/each_time_feedback} shows the accuracy, loss curve, and the firing rates of each layer. From these results, we confirmed experimentally that SAF-E and OTTT$_{\rm O}$ with feedback connection are close, which consistent with the assertion in Corollary.~\ref{corollary3}.

Next, we compare the performance of SAF-E with OTTT$_{\rm O}$ when both networks have a feedforward connection. We used the RepVGG network \citep{Xiaohan2021repvgg}. The results are shown in Table~\ref{table/each_time_feedforward} and Fig.~\ref{fig/each_time_feedforward}. As in the case of feedback connections, we can also see that SAF-E and OTTT$_{\rm O}$  with a feedforward connection are close. 

\begin{table}[t]
 \caption{Gradient comparison of SAF-E and OTTT$_{\rm O}$ on CIFAR-10 and CIFAR-100.} 
    \centering
    \begin{tabular}{lccc}
    \hline
     \bf{Dataset}&\bf{$T$}&\bf{Correlation Coefficient}&\bf{MAE}~($\times 10^{-6}$)  \\
         \hline
        CIFAR-10&6&0.984$\pm$0.001&2.26$\pm$0.10\\
        CIFAR-100&6&0.969$\pm$0.001&12.4$\pm$0.40\\
        \hline
    \end{tabular}
    \label{tab:my_label}
\end{table}
\begin{table}[t]
     \caption{Gradient comparison of SAF-F and OTTT$_{\rm A}$ on CIFAR-10 and CIFAR-100.} 
    \centering
    \begin{tabular}{lccc}
    \hline         \bf{Dataset}&\bf{$T$}&\bf{Correlation Coefficient}&\bf{MAE}~($\times 10^{-6}$)  \\
         \hline
        CIFAR-10&6&0.566$\pm$0.097&23.2$\pm$4.40\\
        CIFAR-10&32&0.557$\pm$0.064&22.5$\pm$4.00\\
        CIFAR-100&6&0.590$\pm$0.021&72.6$\pm$12.8\\
        CIFAR-100&32&0.557$\pm$0.030&73.6$\pm$17.2\\
        \hline
    \end{tabular}
    \label{tab:my_label2}
\end{table}

\begin{table}[t]
  \caption{Comparison of SAF-E and OTTT$_{\rm O}$ with feedback connection on CIFAR-10~($T=6$). The values in parentheses are the changes in firing rate and accuracy due to inference by the SNN composed of LIF. }
  \centering
  \begin{tabular}{lcc}   \bf{Method} &\bf{Firing rate~[\%]}& \bf{Accuracy~[\%]}  \\
    \hline
    OTTT$_{\rm O}$ with feedback & 14.74$\pm$0.34&  93.23$\pm$0.28\\
    SAF-E with feedback~(ours) & 14.32$\pm$0.12 ($1.78 \times 10^{-6}$)& 93.20$\pm$0.18 ($6.67 \times 10^{-5}$)\\
    \hline  \end{tabular}\label{table/each_time_feedback}
\end{table}

\begin{table}[t]
  \caption{Comparison of SAF-E and OTTT$_{\rm O}$ with feedforward connection on CIFAR-10~($T=6$). The values in parentheses are the changes in firing rate and accuracy due to inference by the SNN composed of LIF. }
  \centering
  \begin{tabular}{lcc}   \bf{Method} &\bf{Firing rate~[\%]}& \bf{Accuracy~[\%]}  \\
    \hline
    OTTT$_{\rm O}$ with feedforward & 15.03$\pm$0.10&  93.53$\pm$0.15\\
    SAF-E with feedforward~(ours) & 14.72$\pm$0.04 (3.342$\times10^{-5}$)& 93.53$\pm$0.17 (0.004)\\
    \hline  \end{tabular}\label{table/each_time_feedforward}
\end{table}

\begin{figure}[t]
  \begin{minipage}[b]{0.33\linewidth}
    \centering
\includegraphics[keepaspectratio, scale=0.36]{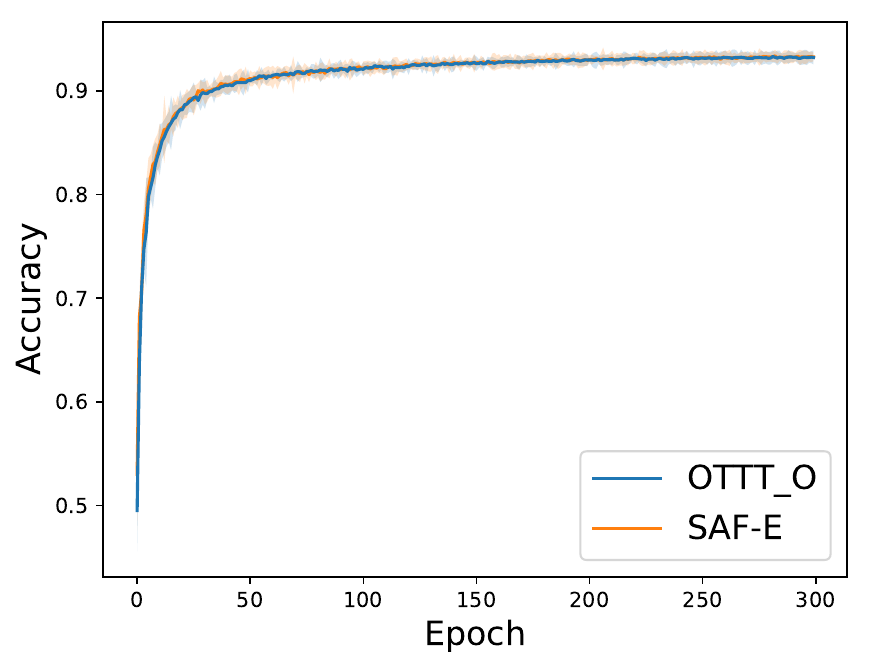} \label{acc_each_time_feedback}
  \end{minipage}
  \begin{minipage}[b]{0.33\linewidth}
    \centering
\includegraphics[keepaspectratio, scale=0.36]{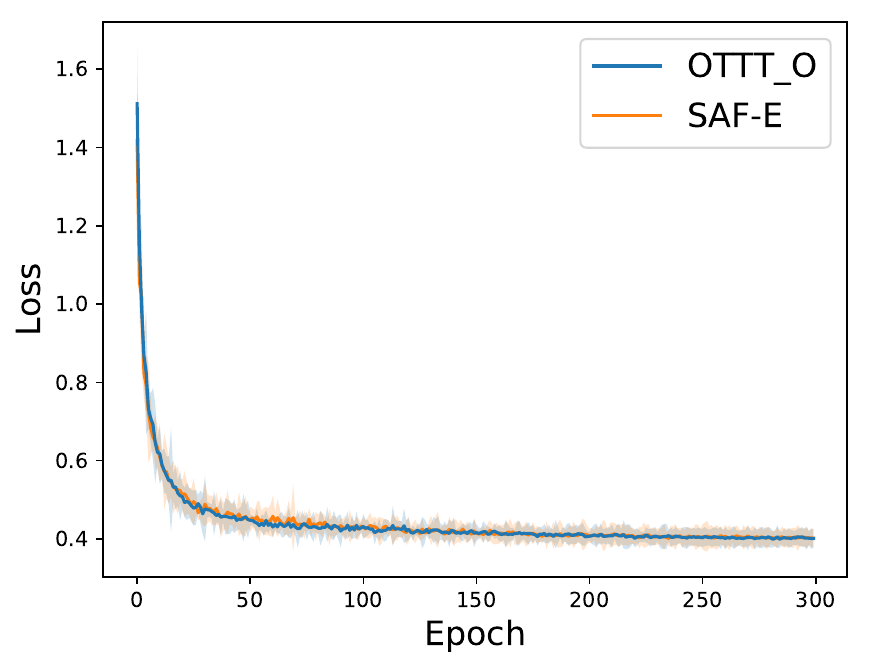}
\label{loss_each_time_feedback}
  \end{minipage}
  \begin{minipage}[b]{0.33\linewidth}
    \centering
\includegraphics[keepaspectratio, scale=0.36]{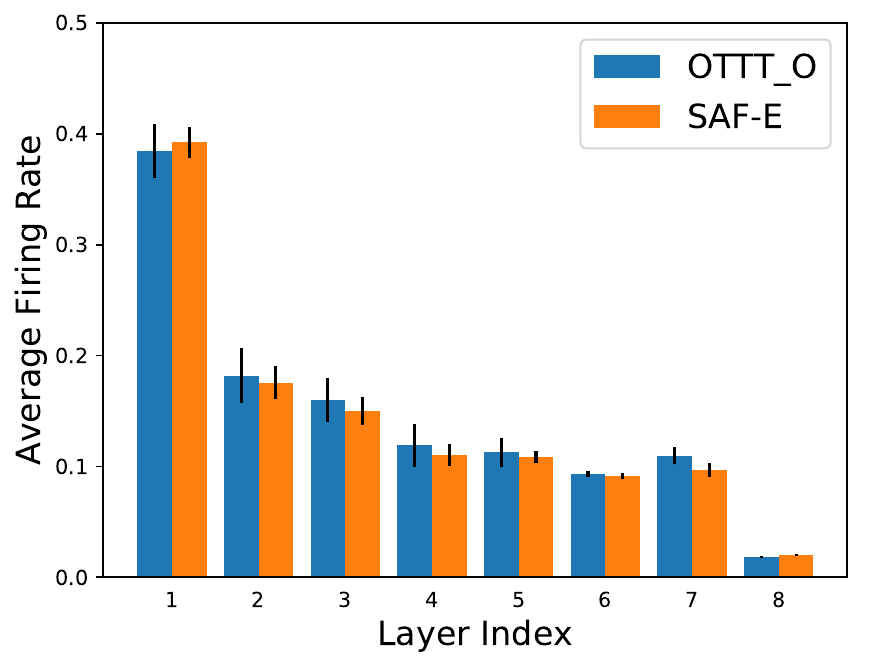} \label{firing_rate_each_time_feedback}
  \end{minipage}
  \caption{Accuracy and loss curves, and firing rates of each layer of SAF-E and OTTT$_{\rm O}$ with feedback connection on CIFAR-10~($T=6$).}\label{fig/each_time_feedback}
\end{figure}

\begin{figure}[t]
  \begin{minipage}[b]{0.33\linewidth}
    \centering
\includegraphics[keepaspectratio, scale=0.36]{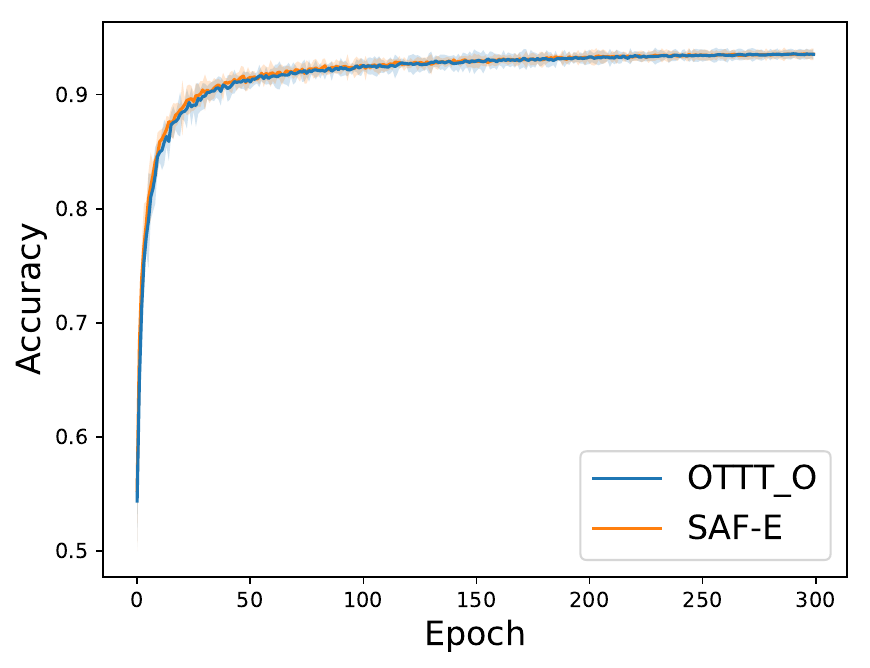} \label{acc_each_time_feedforward}
  \end{minipage}
  \begin{minipage}[b]{0.33\linewidth}
    \centering
\includegraphics[keepaspectratio, scale=0.36]{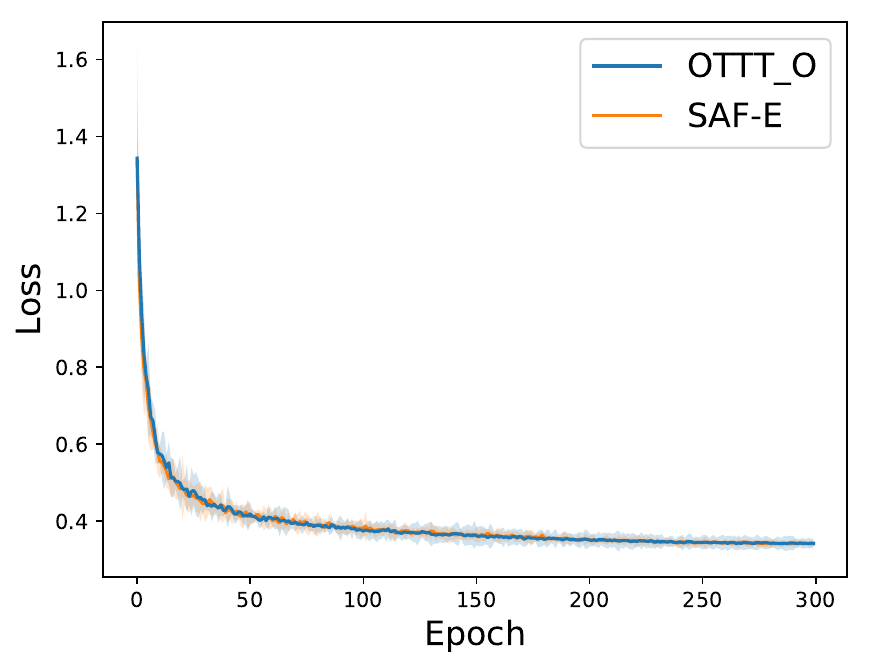}
\label{loss_each_time_feedforward}
  \end{minipage}
  \begin{minipage}[b]{0.33\linewidth}
    \centering
\includegraphics[keepaspectratio, scale=0.36]{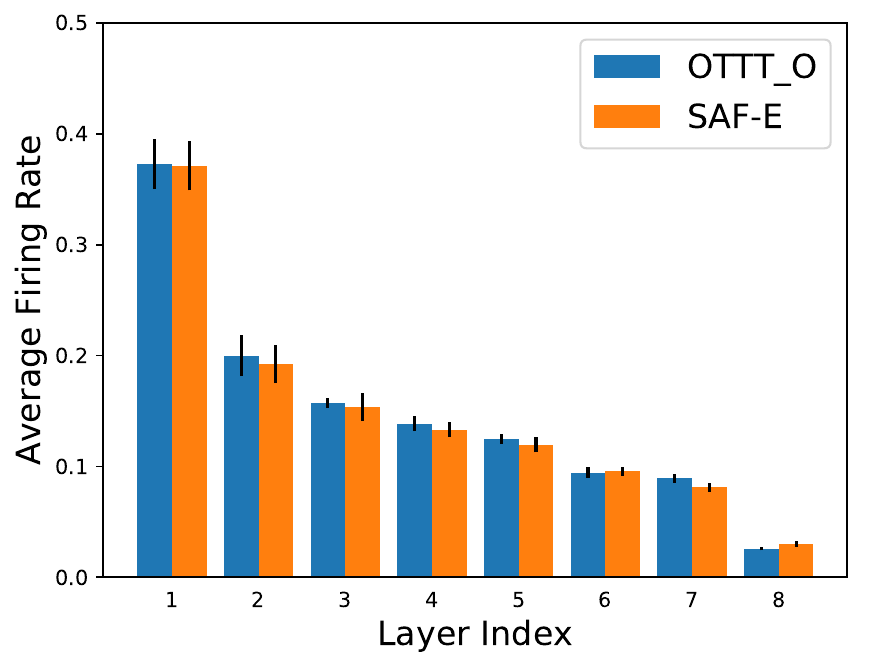} \label{firing_rate_each_time_feedforward}
  \end{minipage}
  \caption{Accuracy and loss curves, and firing rates of each layer of SAF-E and OTTT$_{\rm O}$ with feedforward connection on CIFAR-10~($T=6$).}\label{fig/each_time_feedforward}
\end{figure}
\end{document}